\algnewcommand\algorithmicforeach{\textbf{for each:}}
\algnewcommand\ForEach{\item[ \algorithmicforeach]}
\title{Reducing False Discoveries in Statistically-Significant Regional-Colocation Mining: A Summary of Results} %TODO Please add
\titlerunning{Reducing False Discoveries in Statistically-Significant Regional-Colocation Mining} %TODO optional, please use if title is longer than one line
\author{Subhankar Ghosh}{Department of Computer Science \& Engineering, University of Minnesota, Minneapolis, MN, USA}{ghosh117@umn.edu}{}{}%TODO mandatory, please use full name; only 1 author per \author macro; first two parameters are mandatory, other parameters can be empty. Please provide at least the name of the affiliation and the country. The full address is optional
\author{Jayant Gupta}{Department of Computer Science \& Engineering, University of Minnesota, Minneapolis, MN, USA}{gupta423@umn.edu}{}{}
\author{Arun Sharma}{Department of Computer Science \& Engineering, University of Minnesota, Minneapolis, MN, USA}{sharm485@umn.edu}{}{}
\author{Shuai An}{Department of Economics, University of Minnesota, Minneapolis, MN, USA}{an000033@umn.edu}{}{}
\author{Shashi Shekhar}{Department of Computer Science \& Engineering, University of Minnesota, Minneapolis, MN, USA}{shekhar@umn.edu}{}{}
\authorrunning{S., Ghosh et al.} %TODO mandatory. First: Use abbreviated first/middle names. Second (only in severe cases): Use first author plus 'et al.'
\keywords{Colocation pattern, Participation index, Multiple comparisons problem, Spatial heterogeneity, Statistical significance.} %TODO mandatory; please add comma-separated list of keywords
\begin{document}
\nolinenumbers
\maketitle

%TODO mandatory: add short abstract of the document
\begin{abstract}
Given a set \emph{S} of spatial feature types, its feature instances, a study area, and a neighbor relationship, the goal is to find pairs $<$a region ($r_{g}$), a subset \emph{C} of \emph{S}$>$ such that \emph{C} is a statistically significant regional-colocation pattern in $r_{g}$. This problem is important for applications in various domains including ecology, economics, and sociology. The problem is computationally challenging due to the exponential number of regional colocation patterns and candidate regions. Previously, we proposed a miner \cite{10.1145/3557989.3566158} that finds statistically significant regional colocation patterns. However, the numerous simultaneous statistical inferences raise the risk of false discoveries (also known as the multiple comparisons problem) and carry a high computational cost. We propose a novel algorithm, namely, multiple comparisons regional colocation miner (MultComp-RCM) which uses a Bonferroni correction. Theoretical analysis, experimental evaluation, and case study results show that the proposed method reduces both the false discovery rate and computational cost. 
\end{abstract}

\section{Introduction}
Regional-colocation patterns are (study sub-area $R$, feature-type subset $C$) pairs such that instances of feature-types in $C$  often are present in $R$ in close proximity. Given a set \emph{S} of spatial features (e.g., coffee shops, restaurants), their feature instances, a study area, and a neighbor relationship (e.g., geographic proximity), the goal is to identify pairs  $<$region $r_{g}$, subset \emph{C} of \emph{S}$>$ such that instances of \emph{C} are statistically significant in that region $r_{g}$. Figure \ref{fig:ColocationToyExample}(a) shows a set of instances input into a regional-colocation miner, consisting of three different spatial feature types, a neighborhood relation between feature instances, and a space partitioning. Figure \ref{fig:ColocationToyExample} (b), shows the set of statistically significant regional colocations identified after significance testing (described in Section \ref{sec:stat_sig_test_basic_concepts}). The output is a pair of regional colocations: $r_{1}$ showing a strong regional colocation between all three features (i.e., $f_{A}$, $f_{B}$, and $f_C$) and $r_{2}$ showing a strong regional-colocation between two features (i.e., $f_{A}$ and $f_{B}$). The rest of the area within the map shows less spatial interaction (low participation index) between these features.

\begin{figure}[thbp]
\centering
% \includesvg[width=0.425\textwidth]{images/Figure1.svg}
% \includegraphics[width=0.90\textwidth]{}
\includegraphics[width=0.80\textwidth]{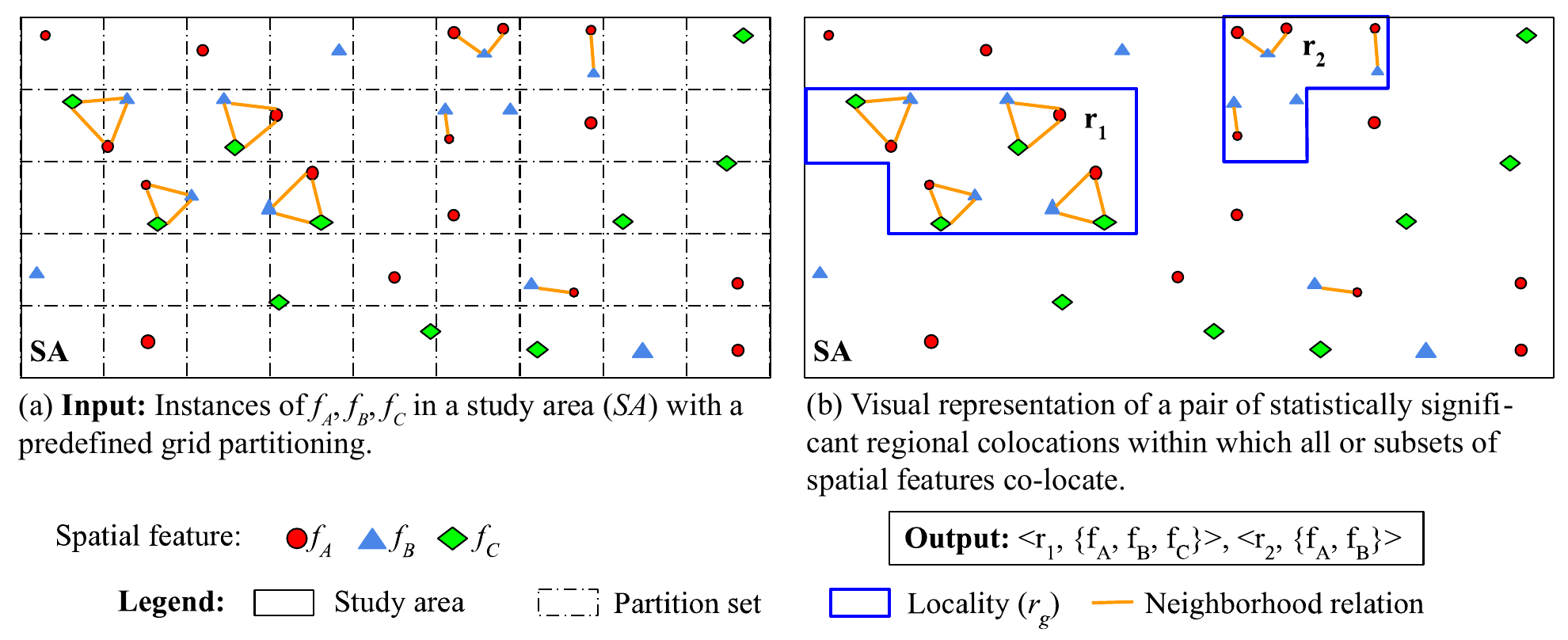}
\caption{Regions where \textbf{all} or \textbf{subsets} of $f_A$, $f_B$ and $f_C$ significantly co-locate in the study area}\label{fig:ColocationToyExample}
% \vspace{-10pt}
\end{figure}

The problem of mining statistically significant regional-colocation patterns is societally important with applications in retail, public health, ecology, public security, transportation, etc. For example, retail establishments (e.g., fast food chains and coffee shops) often colocate to reach each other's customers. Thus, finding statistically significant regional colocation patterns among competing retail stores has tremendous value for retail analysis. When identifying colocation patterns in societal domains, it's important to minimize the chance of false discoveries. A famous historical example was between $1900$ and $1904$ when urban districts of San Francisco experienced an outbreak of bubonic plague, resulting in $119$ deaths. The federal and state authorities falsely identified the victims' ethnicity as a highly correlated feature to the plague. This false discovery brought an immense adverse impact on San Francisco's management of the plague. Even when we don't unfairly stigmatize groups or regions, false discoveries waste money, and resources. Comparing the city's response to the same plague between $1907$ and $1908$, where rats were correctly identified as a highly correlated feature and the plague was swiftly contained, the negative impact of false discovery was even more strongly felt \cite{risse1992long}. Table \ref{tab:App_Domain} provides application domains and use cases.

\begin{table}[tbhp]
{
\footnotesize
\caption{Regional-colocation applications.}\label{tab:App_Domain}
\begin{tabular}{|p{1.85cm}|p{11cm}|}
 \hline
 \textbf{Application Domain} & \textbf{Example} \\ \hline
 Retail & $<$China, \{McDonald’s and KFC\}$>$,  $<$USA, \{McDonald’s and Jimmy John's\}$>$ \\\hline
 Public Health & $<$Ports, \{Plague and rats\}$>$, $<$Middle East, \{Middle East Respiratory Syndrome (MERS) in 2012 and MERS-CoV\}$>$ \\ \hline
 Ecology & $<$Indian/Pacific Ocean, \{Anemone and Clownfish\}$>$, $<$Nile River delta, \{Nile Crocodile and Egyptian Plover\}$>$\\ \hline
 Public Safety & $<$Region around bars, \{Assault crimes and drunk driving\}$>$ \\ \hline
 Transportation & $<$Near bus depots, \{High $NO_{x}$ concentrations and buses\}$>$\\ \hline
\end{tabular}
}
\end{table}

The problem of statistically significant regional-colocation pattern detection ($SSRCPD$) is computationally challenging due to the following reasons:  (1) Significance testing in this problem requires considering multiple statistical inferences simultaneously which leads to an increase in Type-$I$ error (i.e false discoveries). (2) There is an exponential number of candidate regional patterns, e.g., the dataset used in the case study (Section \ref{subsec:case-study}) consists of $1473$ different retail brands and their locations in Minnesota, resulting in $2^{1473}$ different candidate patterns. (3) Spatial partitioning approach would lead to an infinite number of candidate region subsets.
% Since our space is continuous, any 
% For example, to achieve a 95\% statistical confidence we need to perform 99 Monte Carlo simulations.
\begin{figure}
	\centering
	\includegraphics[width=0.45\linewidth]{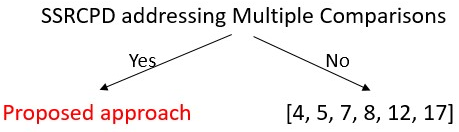}
	\caption{Comparison with Related Work}
	\label{fig:relatedwork}
 \vspace{-10pt}
\end{figure}
% Statistically significant regional colocation pattern detection

Figure \ref{fig:relatedwork} shows a decision tree that distinguishes our manuscript from previous works, where $SSRCPD$ refers to $Statistically\ significant\ regional\ colocation\ pattern\ detection$. Earlier work on regional-colocation pattern detection either uses data unaware space partitioning (e.g., Quadtree \cite{Celik2007ZonalCP, li2018local}) or clustering of colocation instances \cite{Deng2017MultilevelMF, Eick2008FindingRC}. However, these techniques lack statistical significance testing and depend on input parameters (e.g., participation index threshold) which may vary geographically. Statistically significant global colocation mining was introduced by \cite{barua2013mining}, while statistically significant regional colocation mining was first explored in \cite{10.1145/3557989.3566158}. In \cite{10.1145/3557989.3566158} we proposed $SSRCM$ which utilizes a subgraph enumeration approach to detect statistically significant regional colocation patterns where the regions would be composed of one or more contiguous atomic partitions (smallest region within which a candidate pattern is statistically significant). This algorithm was expensive because expanding the region within which the pattern was statistically significant required recalculating the $p$-$value$. Since detecting statistically significant regional colocation patterns requires performing multiple simultaneous statistical inferences, this results in the multiple comparisons problem \cite{rupert2012simultaneous}, which risks false discoveries (a.k.a. Type-I errors). The problem results in a rapid increase in the probability of Type-I error as the number of partitions increases. To address the multiple comparisons problem, we propose a robust statistically significant regional colocation miner (MultComp-RCM) using a Bonferroni correction \cite{bonferroni1936teoria}. The proposed approach recommends stricter $p$-$values$ to reduce false discoveries (Type-I errors), thus setting an upper bound on the overall significance level ($\alpha$, which is 0.05 for a 95\% statistical confidence).

\noindent\textbf{Contributions:} 
\begin{itemize}[noitemsep,topsep=5pt]
    \vspace*{-1.0mm}
    % \item We highlight the risk of false discovery in the problem of statistically significant regional-colocation pattern ($SSRCP$) detection.
    \item We proposed a new approach Multiple comparisons regional colocation miner (MultComp-RCM) to reduce false positives using a well-established statistical technique for multiple comparisons correction, the Bonferroni test.
    % \item The paper shows that MultComp-RCM is Type-I correct, i.e., it has a precision close to $1$ and Type-I complete, i.e., it detects every regional-colocation pattern with a high statistical confidence.
    \item The paper provides a comparative analysis showing that the proposed MultComp-RCM is computationally more efficient than SSRCM.
    \item The paper describes a sensitivity analysis using synthetic data which shows that MultComp-RCM requires an increasingly smaller number of significance tests and participation index computations for an increasing number of regions.
    \item We proposed a case study on retail establishments in Minnesota using the Safegraph POI dataset \cite{10.1145/3557989.3566158}. The proposed method discovers new regional-colocation patterns involving fast food and coffee retailer feature-type subsets in a Minnesota counties study area. We also confirm that the Bonferroni correction in our method reduces false discoveries.
\end{itemize}

%  However, the proposed technique can apply to larger feature subsets as well. 
\noindent\textbf{Scope:} For simplicity, this paper focuses on regional-colocation patterns consisting of two or three different features. In our case study, we enumerated regions based on a contiguous collection of counties. Nevertheless, this work can be extended to different types of regions (e.g., ports). We also do not consider segregation patterns (negative spatial interaction) or the temporal aspects of the patterns.

\noindent\textbf{Organization:} The paper is organized as follows. Section \ref{sec:Problem Definition} reviews basic concepts and formally defines the problem. In section \ref{sec:Methodology} we briefly review SSRCM and decribe the proposed approach (MultComp-RCM). Section \ref{sec:theoretical_analysis} gives a theoretical analysis of MultComp-RCM. We present the experimental evaluation in Section \ref{sec:Experiments} and a case study in Section \ref{subsec:case-study}. Section \ref{sec:related} briefly surveys related work and discussion. Section \ref{sec:Conclusion} concludes the paper with future work.

\section{Basic Concepts and Problem Definition.}
\label{sec:Problem Definition}
First, we review basic concepts related to colocation detection, statistical significance testing, and the multiple comparisons problem. Then, we formally define statistically significant regional colocation pattern detection.
\subsection{Colocation detection:}
\label{sec:colocation_detection}
In this section, we briefly introduce some taxonomy and the basic concept used to define colocation pattern detection with examples. The basic concepts are as follows:

A \textbf{feature instance} is a geo-located spatial entity which is a type of Boolean feature $f$ with a geo-reference point location $p$ (e.g., latitude, longitude), represented as $<f,p>$. Multiple instances of a feature are represented as $f_{i}$ and can be related to other feature instances $f_{j}$ via a \textbf{neighbor relation} $\mathcal{R}$. For example, geographic proximity is represented as $\mathcal{R}_{f_{i}, f_{j}}$ $\leq$ $\theta$, where $\theta$ is the neighbor relation threshold. In a \textbf{neighbor graph}, we represent features that satisfy such relations as a \textit{node} and their relationship as an $edge$.

A \textbf{colocation candidate} $C$ is a set of  features defined in the given study area ($SA$) or a sub-region ($r_{g}$) where $r_{g}$ $\in$ $SA$. For example, Figure \ref{fig:ColocationToyExample}(a) shows $17$ spatial objects of type $f_{A}$ (circle), $12$ spatial objects of type $f_{B}$ (triangle), and $9$ instances of colocation pattern \{$f_{A}$, $f_{B}$\}. An instance of a \textbf{colocation} satisfies the neighborhood relation $\mathcal{R}$ and forms a \textbf{clique}.

A \textbf{participation ratio ($pr$)} is the ratio of feature instances participating in a relation $\mathcal{R}$ to the total number of instances inside the study region $(SA)$. For a given colocation candidate $C$ and feature $f$, it is represented as $pr(f, C)$ as shown in Equation \ref{equation:PR}:
\begin{equation}
% \centering
    \label{equation:PR}
    pr(f, C) = \frac{participating\_instances(f, C)}{instance(f)}
\end{equation}

For the feature instances shown in Figure \ref{fig:ColocationToyExample}(a) the participation ratio values for the relation $\{f_A, f_B\}$ are $pr(f_{A}, \{f_{A}, f_{B}\}) = \frac{9}{17}$ and $pr(f_{B}, \{f_{A}, f_{B}\}) = \frac{8}{12}$. Further, the participation ratio within a region ($r_{g}$) for a feature $f$ is defined as $pr(f, [r_{g}, C])$. For example, in Figure \ref{fig:ColocationToyExample} $pr(f_{A}, [r_{2}, \{f_{A}, f_{B}\}])$ and $pr(f_{B}, [r_{2}, \{f_{A}, f_{B}\}])$ in region $r_2$ and has the value $\frac{4}{4}$ and $\frac{3}{4}$ respectively.

A \textbf{participation index ($pi$)} is the minimal participation ratio of all feature types in a colocation candidate as described in Equation \ref{equation:PI_basic_concepts}:
\begin{equation}
    \label{equation:PI_basic_concepts}
    % https://tex.stackexchange.com/a/130517
    pi(C) = \underset{f \in C} {min}(pr(f, C))
\end{equation}

The participation index quantifies the spatial interaction within features. Figure \ref{fig:ColocationToyExample}(a) shows participation index of features $f_{A}, f_{B}$ which can be represented as $pi(\{f_{A}, f_{B}\})$ which is  $min(\frac{9}{17},\frac{8}{12})$ or $\frac{9}{17}$.
A \textbf{regional participation index} is the minimal participation ratio of all feature types in the colocation candidate $C$ within region $r_{g}$ as shown below
\begin{equation}
    % https://tex.stackexchange.com/a/130517
    pi([r_{g}, C]) = \underset{f \in C}{min}(pr(f, [r_{g}, C]))
    \label{PI_regional}
\end{equation}

\noindent For instance in Figure \ref{fig:ColocationToyExample}, $pi([r_2, \{f_{A}, f_{B}\}]) = min(\frac{4}{4}, \frac{3}{4}) = \frac{3}{4}$.

\textbf{Colocation patterns} \cite{shekhar2001discovering} is the set of prevalent colocation candidates (based on a prevalence measure, e.g. $pi$), i.e., candidates comprised of features having a high positive spatial interaction. A \textbf{regional-colocation pattern} \cite{li2018local} is a paired region ($r_{g}$) and colocation pattern ($C$), i.e., $<r_{g}, C>$ where the features in pattern $C$ have a high positive spatial interaction in $r_{g}$.

% \subsection{Regional-colocation Pattern}
% \emph{pr(C, f_{i})} = $\frac{number of objects of feature in relation}{total number objects of feature}$

\subsection{Statistical Significance in Colocation Detection:}
\label{sec:stat_sig_test_basic_concepts} 
A statistically significant colocation determines whether an assigned positive spatial interaction between features is statistically significant or could have been observed if the features were in complete spatial randomness (CSR). Other properties in CSR are as follows:
\begin{itemize}[noitemsep,topsep=1pt]
	\item Every feature instance has an equal probability of existing at any point in the study area.
	\item The locations of any feature instances in the study area are independent of each other.
\end{itemize}

% The location of any instance is independent of the location of any other instance in the study area.
\noindent A \textbf{null hypothesis} $(H_{0})$ is a statement of `no effect' or `no difference'. In our problem, the null hypothesis represents the scenario under which there is no spatial interaction between the features in the dataset, i.e., their existence is completely independent of each other.

\noindent An \textbf{alternative hypothesis} $(H_{a})$ is a statement that is tested against a null hypothesis. In our problem, an alternative hypothesis represents the scenario under which there is a positive spatial interaction between the features in the dataset in a region of interest.

\noindent A \textbf{Type-I error} refers to the erroneous rejection of an actually true null hypothesis (or a false positive). In our problem, this would refer to incorrectly assigning a candidate regional-colocation pattern as statistically significant, even though there is a high probability of this pattern being found in CSR or $H_{0}$.

\noindent A \textbf{Type-II error} refers to the failure to reject a null hypothesis $(H_{0})$ that is actually false (or a false negative). This would translate into incorrectly assigning a candidate regional-colocation pattern as not statistically significant.

\noindent A \textbf{point distribution} is a collection of geo-distributed points referring to an event (e.g., road accident) in a spatial domain. A \textbf{point process} ($PP$) is a statistical process that defines the probability distribution of a point over a region. Point processes are essential for defining the null or alternative hypothesis for our statistical significance test. 

\noindent A \textbf{Poisson point process} is defined in a generalized space $S_{P}$ with intensity $\Lambda$ having the following properties:
\begin{enumerate}[noitemsep,topsep=5pt]
	\item The number of points in a bounded Borel set (bounded sets that can be constructed from open or closed sets by repeatedly taking countable unions and intersections) $B \subset S_{P}$ is a Poisson random variable with mean $\Lambda(B)$.
	\item The number of points in $n$ disjoint Borel sets forms $n$ independent random variables. This property results in independent scattering or complete independence.
\end{enumerate}
% \noindent A $PPP$ can be defined on any generalized mathematical space.

\noindent\textbf{Null hypothesis generation}:
\begin{itemize}[noitemsep,topsep=1pt]
    \item For an identical distribution, we generate an equal number of instances of each feature in every partition using summary statistics of the constituent features of the pattern. This ensures that the null hypotheses datasets (although in CSR) closely model the observed dataset in each atomic partition.
    \item For independence, we sample instances from a Poisson point process \cite{illian2008statistical}. To check for acceptable auto-correlation, we use a pair correlation function (PCF) or $g(d)$ up to a distance $d$, where $d$ is data-driven. When $g(d)>1$, it suggests there is clustering at a distance $d$ within the feature instances, while $g(d)=1$ represents CSR. 
\end{itemize}

\noindent \textbf{Statistical significance test}: Since the participation index ($pi$) is used to quantify the strength of spatial interaction, the objective is to determine the probability of a pattern's  $pi$ in the observed data. Let $pi_{\emptyset}(C)$ denote the participation index for pattern $C$ in the null hypothesis and $pi_{obs}(C)$ represent the participation index for candidate colocation \emph{C} in the observed data. Then, we compute the following probability \cite{barua2013mining}:
\begin{equation}
\label{PIObs}
    p = pr(pi_{\emptyset}(C) \geq pi_{obs}(C)) = \frac{R^{\geq pi_{obs}} + 1}{R + 1}
\end{equation}
where $R^{\geq pi_{obs}}$ represents the number of Monte Carlo simulations within which the participation index ($pi_{\emptyset}(C)$) for pattern $C$ is greater than in the observed data ($pi_{obs}(C)$) and $R$ refers to the total number of Monte Carlo simulations. If $p \leq \alpha$, we consider $pi_{obs}(C)$ as statistically significant at level $\alpha$.

\noindent \textbf{Regional statistical significance test}: To test for regional significance, we use simulated (i.e., computer-generated) candidate regions. For example, if we are trying to determine if $f_A$ and $f_B$ are statistically significant in locality $r_2$ (Figure \ref{fig:ColocationToyExample}(b)), we generate null hypothesis samples within the locality's boundaries and use the participation index result from each sample to perform the significance test for $r_g$. Figure \ref{fig:NH_Example}(a) and \ref{fig:NH_Example}(b) display two of the $R$ different null hypotheses which are used to compare the participation index of the regional-colocation pattern $[r_2, \{f_{A}, f_{B}\}]$ in the observed data. As shown, the participation index for the pattern $[r_2, \{f_{A}, f_{B}\}]$ is $1/3$ in both null hypotheses respectively. For a statistical confidence level of $95\%$ the following inequality should hold:
\begin{equation}
\centering
    \label{stat_confidence}
    \Big[ \sum_{r=1}^{R=99} \mathds{1}\bm{(}pi_{obs}([r_{g}, \{f_{A}, f_{B}\}]) \leq pi_{\emptyset_{r}}([r_{g}, \{f_{A}, f_{B}\}])\bm{)} \Big] < 5
\end{equation}
% \centering
%     \label{stat_confidence}
%     \Big[ \sum_{r=1}^{R=99} \mathds{1}\bm{(}pi_{obs}(r_{g}, \{f_{A}, f_{B}\}]) \leq pi_{\emptyset_{r}}([r_{g}, \{f_{A}, f_{B}\}])\bm{)} \Big] < 5,
% \end{multline}
where $r_{g}$ is the region of interest, and $\mathds{1}$ is an indicator function. We can compute $R$ from $\alpha  = 0.05$ using $\alpha(R + 1) = 5$ \cite{besag1977simple}.
 % Here, $\alpha = 0.05$.
\begin{figure}[thbp]
\centering
% \includesvg[width=0.425\textwidth]{images/Figure1.svg}
% \includegraphics[width=0.90\textwidth]{}
\includegraphics[width=0.85\textwidth]{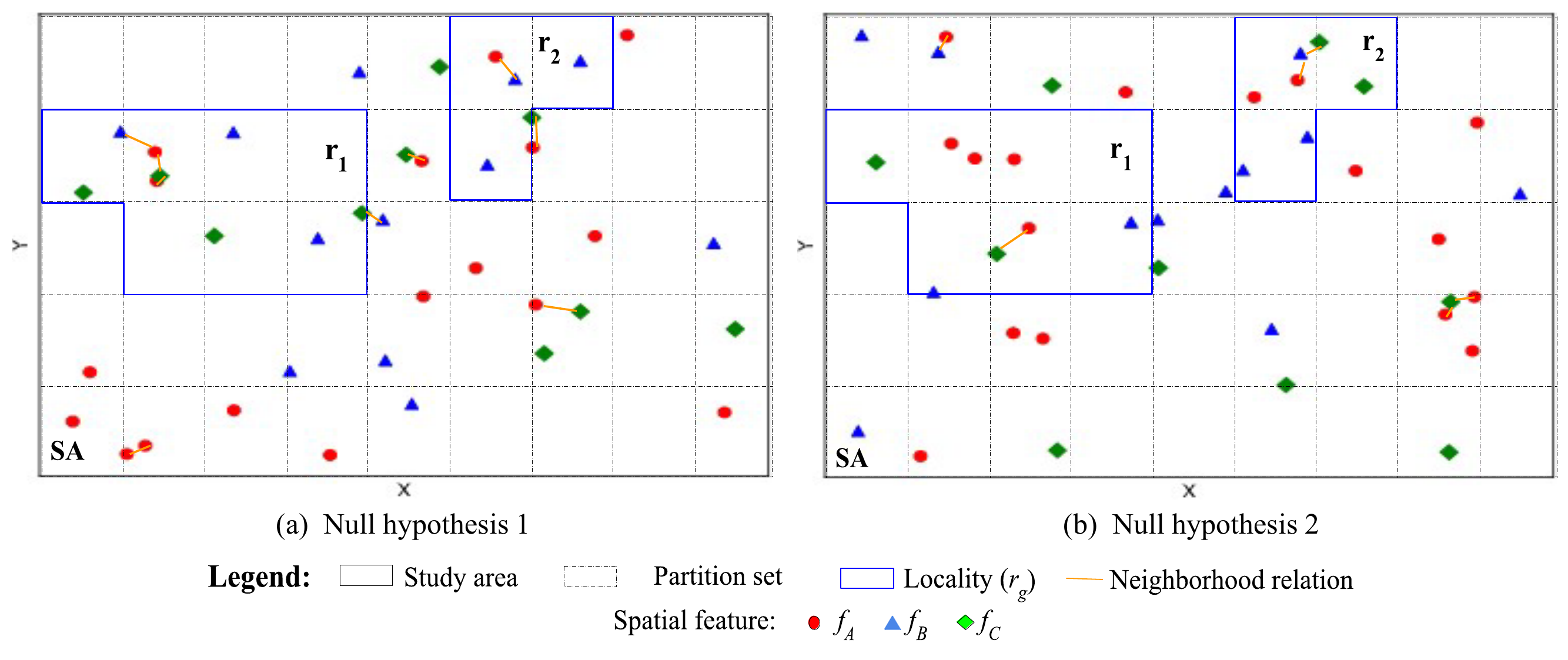}
\caption{Two example null hypotheses for significance testing of the observed data in Figure \ref{fig:ColocationToyExample}}\label{fig:NH_Example}
% \vspace{-10pt}
\end{figure}
% \subsection{The multiple comparisons problem}
% \label{sec:mult_comp_basic_concepts}
The \textbf{multiple comparisons problem} \cite{rupert2012simultaneous} occurs when every inference in a set of statistical inferences simultaneously  has the potential to produce a discovery. The more inferences are made on a particular data set, the more likely it is to incorrectly reject the null hypothesis. Most techniques to address this problem require a stricter significance threshold for individual comparisons to compensate for the number of inferences being made. A stated confidence level generally applies to individual tests. It is often desirable to have a confidence level for a whole family of simultaneous tests.

The \textbf{Bonferroni correction \cite{bonferroni1936teoria}} is a method to address the multiple comparisons problem and the simplest method for reducing Type-I errors. It is a conservative method with a greater risk of failure to reject a false null hypothesis, thus resulting in Type-II errors.
\vspace*{-2.0mm}
\subsection{Formal problem formulation}
\label{sec:problem_definition}
\vspace*{-1.0mm}
The problem of statistically significant regional-colocation pattern detection is as follows:\newline
\textbf{Input:}
\begin{enumerate}[noitemsep,topsep=2pt]
    \vspace*{-0.5mm}
    \item A set ($F$) of spatial-features 
    \item $N$ geo-located spatial feature instances.
    \item A study area $S_{A}$ composed of space partitions (e.g., counties).
    \item A statistical significance level $\alpha$.
    \item A neighbor relationship ($\mathcal{R}$).
    \vspace*{-0.5mm}
\end{enumerate}
\textbf{Output:} Statistically significant regional-colocation patterns, $<r_{g}, C>$ where $C\subset F$.
\\\textbf{Objective:} Reducing Type-I error (false positives).\\
\textbf{Constraints:} Higher statistical confidence of output patterns.
% Traditional regional-colocation miners are data-driven.

\textbf{Reasoning behind problem output:} Testing for statistical significance on regional-colocation outputs ensures that spurious patterns aren't detected from the dataset. Otherwise, regions may be enumerated due to a high density of feature instances or spatial auto-correlation. In addition, significance testing for the union of many partitions leads to multiple statistical inferences. Due to the union of partitions, the probability of finding chance patterns within the bigger region (i.e., the union of partitions) is higher; this phenomenon is not accounted for by the $p$-$value$ threshold for a single partition. This leads to the multiple comparisons problem, resulting in a higher false discovery rate. In application domains related to regional-colocation pattern detection, reducing Type-I errors (false positives) takes higher priority over reducing Type-II errors (false negatives). These Type-II errors might result in missing the detection of certain patterns which might have a lower $p-value$.
% This can lead to increase in Type-I error (false positives), leading to the.
%  within which the pattern instances occurred due to an abundance of features without the presence of spatial interaction 
% In a space partitioning based approach, this
% For example, with no significance testing earlier works may enumerate regions either due to a high density of feature instances or due to spatial auto-correlation. 

In this situation, checking for a particular $\alpha$ level in the individual statistical inferences is insufficient. We also need to control the family-wise error rate which represents the probability of making one or more false discoveries (Type-I errors) \cite{rupert2012simultaneous}. We use the Bonferroni correction in MultComp-RCM to tackle this problem arising from multiple hypothesis tests. This conservative approach ensures that the pattern output has high statistical confidence while ignoring patterns that might have comparatively lower confidence, which is our primary objective. Another benefit of this method is the computational efficiency due to the smaller number of significance tests and participation index computations required as compared to the baseline \cite{10.1145/3557989.3566158}. The Bonferroni correction proposes stricter $p-value$ thresholds which might be a bottleneck for large scale applications, such as when dealing with hundreds of atomic partitions. This may also lead to a higher possibility of false negatives (Type-II errors).

\section{Methodology}
\label{sec:Methodology}
To keep the paper self-contained, we first briefly review the SSRCM, our previous statistically significant regional-colocation miner \cite{10.1145/3557989.3566158}, and a sub-routine on significance testing. We then describe the proposed approach in Section \ref{sec:proposed} and provide an example highlighting the computational cost savings of the new approach.

\subsection{Statistically Significant Regional-Colocation Miner}
\label{sec:baseline}

\textbf{Key idea:} In \cite{10.1145/3557989.3566158}, we started by considering partitions with at least $3$  instances of each feature which comprise the regional-colocation pattern.  This ensures that the features constituting the pattern all have a considerable presence in the enumerated partitions. We then use the regional statistical significance test as described in Algorithm \ref{algorithm1} to determine the atomic footprints of the pattern, i.e., statistically significant pattern within individual partitions. While computing the participation index, we limit our neighborhood to an empirically determined distance ($d$) to mine meaningful colocated features.

\begin{algorithm}[thp]
\caption{Significance testing} \label{algorithm1}
\footnotesize
\begin{flushleft}
\vspace{-10pt}
\hspace{\algorithmicindent}\textbf{Input:}\\
\hspace*{\algorithmicindent} - A spatial dataset \emph{S} consisting of features \{$f_{A}, f_{B}, ...$\}\\
\hspace*{\algorithmicindent} - A study area $(S_{A})$ and an atomic partition $r_{g} \subset S_{A}$\\
\hspace*{\algorithmicindent} - Statistical significance level $\alpha$\\
\hspace*{\algorithmicindent} - A candidate colocation pattern $C$\\
\hspace*{\algorithmicindent} - A set of $R$ Null hypotheses ($NH_{\emptyset}$) data each modelled as colocation $C$ in atomic partition $r_{g}$\\
\hspace*{\algorithmicindent} - Distance $d$ for participation index $(pi)$ calculation\\
\hspace*{\algorithmicindent}\textbf{Output:}\\
\hspace*{\algorithmicindent} 1. $<r_{g}, C>$ is significant or not \\
% which is a subset of features \{$f_{A}, f_{B}, ...$\} from \emph{S}
% \hspace*{\algorithmicindent} $pi_{obs}$ of $C$ in $r_{g}$\\
 \hspace*{\algorithmicindent} 2. $p$-$value_C^{r_g}$\\
\end{flushleft}
\vspace{-10pt}
\begin{algorithmic}[1]
    \Procedure{Significance Testing}{}
    \State{Statistically significant result $SSR_C^{r_g}$ $\leftarrow$ False}
    % \State{R := Number of null hypotheses}
            \State {Counter $R^{\geq pi_{obs}}$ $\leftarrow$ 0}
            \State {Calculate $pi_{obs}$ for $C$ at $d$ in $r_{g}$}
            \For{$i \in$ [$1,R$]}
                \State{Calculate the $pi_{\emptyset, i}$ of $C$ at $d$ in the $i^{th}$ $NH_{\emptyset}$}
                \If{$pi_{\emptyset, i} \geq pi_{obs}$}
                    \State $R^{\geq pi_{obs}} \leftarrow R^{\geq pi_{obs}} + 1$
                \EndIf
            \EndFor
                \State $p$-$value_{C}^{r_g}= \frac{R^{\geq pi_{obs}}+1}{R+1}$
                \If{$p$-$value_{C}^{r_g}$ $\leq$ $\alpha$}
                    \State {$SSR_C^{r_g}$ $\leftarrow$ True \Comment{(i.e., $<r_{g}, C>$ is statistically significant)}}
                    % \State $significance\_result$ $:=$ Significant
                \Else
                    \State{$SSR_C^{r_g}$ $\leftarrow$ False \Comment{(i.e., $<r_{g}, C>$ is not statistically significant)}}
                    % \State $significance\_result$ $:=$ Not Significant
                \EndIf
    \State {return $SSR_C^{r_g}$, $p$-$value_{C}^{r_g}$}
\EndProcedure
\end{algorithmic}
\end{algorithm}
\setlength{\textfloatsep}{5pt}

% using the neighborhood relationship between the regions, 
To find the union of partitions, we first form an undirected unweighted graph $(G=V, E)$ where each vertex $(V)$ refers to a partition within which the pattern is statistically significant and an edge $(E)$ between two $V$s represents a shared boundary between them. The graph representation allows the use of graph traversal algorithms (e.g., DFS) to find statistically significant regions that are the union of partitions in $V$.

We note that the union of two atomic footprints within which a candidate regional colocation pattern is statistically significant does not imply that the resultant footprint is a significant regional colocation pattern.  Thus, we must recompute the $pi$ for the candidate pattern in the new region and perform the significance test again.  As we progress along the edges of $G$, the final output is a larger region composed of contiguous atomic partitions such that the candidate pattern is statistically significant, both within the atomic partitions as well as in the region formed by the union of the output atomic footprints. This is represented by the  \textbf{largest connected component}. Algorithm \ref{algorithm2} provides the pseudo-code of $SSRCM$ to find statistically significant regional colocations.
% (dependent\ on\ \alpha)$

\begin{algorithm}[h]
\caption{Statistically Significant Regional-Colocation Miner (SSCRM)} \label{algorithm2}
\footnotesize
\begin{flushleft}
\vspace{-10pt}
\hspace{\algorithmicindent}\textbf{Input:}\\
\hspace*{\algorithmicindent} - A Spatial dataset \emph{S} consisting of features \{$f_{A}, f_{B}, ...$\}\\
\hspace*{\algorithmicindent} - A study area $(S_{A})$ and a space partitioning $R_{g}$\\
\hspace*{\algorithmicindent} - Statistical significance level $\alpha$\\
\hspace*{\algorithmicindent} - Maximum pattern size $N$\\
\hspace*{\algorithmicindent} - Lower bound $LB$ (in meters)\\
\hspace*{\algorithmicindent} - Upper bound $UB$ (in meters)\\
\hspace*{\algorithmicindent}\textbf{Output:}\\
\hspace*{\algorithmicindent} 1. List of statistically significant regional colocation patterns $[<r_{g}, C>]$ \\
% which is a subset of features \{$f_{A}, f_{B}, ...$\} from \emph{S}
\hspace*{\algorithmicindent}\textbf{Variables:}\\
\hspace*{\algorithmicindent} Distance between feature instances $d$ 
% d is always bounded, where the bounds are determined empiricially from the given observed data
\end{flushleft}
\vspace{-10pt}
\begin{algorithmic}[1]
    \Procedure{Statistically Significant Regional-Colocation Miner}{}
    \ForEach{$f_{k}$ in \{$f_{A}, f_{B}, ...$\}}
    \State {Generate $R$ null hypotheses ($NH_{\emptyset}$) using summary statistics in each $r_{g} \in R_{g}$.}
    \EndFor
    \ForEach{candidate pattern $C_{m} \in \{C_{1}, C_{2},...,C_{M}\}$}
    \For{distance $d$ $\in$ [LB, LB + 10,..., UB]}
        \ForEach{$r_{g} \in R_{g}$}
            % \State{$significance\_result,\ pi_{obs},\ p$-$value := Significance\_Testing(S, r_{g}, \alpha, C_{m}, Null\_Data, d)$}
            % \State $pi_{obs} \leftarrow piCalculator(r_g, C_m, S)$
            \State{$SSR_{C_m}^{r_g}$, $p-value$ $\leftarrow$ Significance Testing(S, $r_{g}$, $\alpha$, $C_{m}$, $NH_{\emptyset}$, $d$)}
            
            \If{$SSR_{C_m}^{r_g}$ is True}
                \State{Insert $r_{g}$ in significant atomic partitions list}
            % \Else
            %     \State $r_{g}$ = largest connected significant component
            \EndIf
        \EndFor
        \State{Compose Neighborhood graph ($G$) from significant atomic partitions list }
        % \State {Compose Graph, $G := (V,E)$ from $significant\_atomic\_partitions\_list$ and $adj\_matrix(R_{g})$.}
        
        % \State{$r_{g} := r_{g}$ with highest $pi$}
        \State{$r_{g}^{final} \leftarrow r_{g}^{maxPI}$} \Comment{atomic partition in $G$ with highest $pi$}
        \ForEach{$r_{g}$ $\in$ Depth First Graph Traversal of $G$ \\ \hfill from vertices adjacent to $r_{g}^{maxPI}$} \Comment{$r_{g} \neq r_{g}^{maxPI}$}
            \State{$r_{g}^{temp} \leftarrow r_{g}^{final} \cup r_{g}$}
            \State{$SSR_{C_m}^{r_g}$, p-value $\leftarrow$ Significance Testing($S$, $r_{g}^{temp}$, $\alpha$, $C_{m}$, $NH_{\emptyset}$, $d$)}
            % \State{$significance\_result,\ pi_{obs},\ p$-$value := Significance\_Testing(S, r_{g\_temp}, \alpha, C_{m}, Null\_Data, d)$}
            \If{$SSR_{C_m}^{r_g}$ is True}
                \State{$r_{g}^{final} \leftarrow r_{g}^{temp}$}
            \EndIf
        \EndFor
        \State{Add $<r_{g}^{final}, C_{m}>$ to $[<r_{g}, C>]$}
    \EndFor
    % \State Continue to next pattern $C_{m+1}$   
\EndFor
\State{\textbf{return} $[<r_{g}, C>]$}
\EndProcedure
\end{algorithmic}
\end{algorithm}
\setlength{\textfloatsep}{5pt}

%Given partition set, and observed data calculate partiticpation index.

\subsection{Multiple Comparisons Regional Colocation Miner (MultComp-RCM)}
\label{sec:proposed}
\textbf{Key Idea:}
The baseline $SSRCM$ computes a significance test for every union of statistically significant partitions, resulting in many participation index $(pi)$ computations and significance tests. We address this by using a Bonferroni correction, which selects atomic partitions conservatively, increasing the chances that their union is also statistically significant. The Bonferroni correction reduces the need for a regional statistical significance test for each union operation.

A Bonferroni correction is used when several independent statistical inferences are being performed simultaneously. Although a given significance threshold  $(\alpha)$ on the $p$-$value$ may be appropriate for an individual test, it is not sufficient for the set of all comparisons. To reduce many false positives, the $\alpha$ needs to be lowered to account for the number of comparisons performed. The Bonferroni correction sets the statistical significance threshold for the entire set of $n$ comparisons to $\alpha/n$ or by multiplying the $p$-$value$ by $n$ and then applying the standard threshold $\alpha$. This conservative correction works even under the most extreme circumstances (e.g. when all $n$ tests are independent of one another).

We check for statistical significance in each input partition in the proposed approach. Then, we perform a graph traversal starting from the atomic partition with the highest $pi$ value for the candidate regional-colocation pattern. Then, instead of recomputing the $pi$ and testing the candidate pattern in the new bigger region (composed of atomic partitions) for statistical significance, we perform a Bonferroni correction. Thus, if we were initially checking for a threshold level of $0.05$, we would be checking for a threshold level of $0.05/2$ in each atomic partition for the union of two partitions. This conservative threshold reduces Type-I error by returning regions with much higher statistical confidence. The union of the atomic partitions is sequential, and every atomic partition must satisfy the adjusted p-value threshold to be considered for the union.

Algorithm \ref{algorithm3} provides a snippet of MultComp-RCM showing the use of the Bonferroni correction. Lines $13$-$18$ show the new steps in the refined approach.

\begin{algorithm}[h]
\caption{MultComp-RCM snippet} \label{algorithm3}
\footnotesize
\begin{flushleft}
\end{flushleft}
\begin{algorithmic}[1]
\vspace{-25pt}
    \Procedure{MultComp-RCM}{}
    \State{$\vdots$}
     \setcounter{ALG@line}{11}
        % \State{$r_{g}^{final} \leftarrow r_{g}$ with highest $pi$}
        \State{$n$  $\leftarrow$ 1} \Comment{Number of atomic partitions in the region.}
       
        \ForEach{$r_{g}$ $\in$ Depth First Graph Traversal of $G$ \\ \hfill from vertices adjacent to $r_{g}^{maxPI}$} \Comment{$r_{g} \neq r_{g}^{maxPI}$}
            % \State{$r_{g}^{temp}$ $\leftarrow$ $r_{g}^{final}$ $\cup$ $r_{g}$}
            \State{Subgraph ($SG$) $\leftarrow$ $r_{g}^{final}$ $\cup$ $r_{g}$}
            \State{flag $\leftarrow$ 1}
            % \State{$r^{maxpi}_{SG}$ $\leftarrow$ Node with highest $pi$ within SG}
            \State{flag = BONF_CHECK(flag , $SG$, n)}
            \If{flag == 1}
                \State{Update $r_{g}^{final}$ $\leftarrow$ $SG$}
                \State{$n \leftarrow n + 1$}
            \EndIf
            % if the above is true:
            % \State{$r_{g\_final} := r_{g\_temp}$}
            % \State{$increment\ n$}
        \EndFor
        \State{Add $<r_{g}^{final}, C_{m}>$ to $<r_{g}^{C}, C>$}
    \State{$\vdots$}
\EndProcedure
        \Procedure{Bonf_Check}{flag , $SG$, n}
        \State{$p$-$value_{threshold}$ $\leftarrow$ $\alpha/(n+1)$}
        % \State{Perform Depth First Search Graph Traversal from source $r^{maxpi}_{SG}$}
            % \ForEach{$r_{g}^{temp}$ $\in$  SG (from source $r^{maxpi}_{SG}$)}
                % \State{\textcolor{red}{p-value $\in$ $r_{g}^{final}$ $\leftarrow$ $\emptyset$}}
                % \If{$p-value$ of $C_{m}$ in $r_{g}^{temp}$ $\not \leq$ $\alpha/(n+1)$}
                \ForEach{$node \in SG$}
                    \If{$p-value$ of pattern $C_{m}$ in $node$ $\not \leq$ $p$-$value_{threshold}$}
                        \State{flag $\leftarrow$ 0}
                    \EndIf
                \EndFor
                % \If{$p-value$ of $C_{m}$ in any node of $SG$ $\not \leq$ $p$-$value_{threshold}$}
                    % \State{check if each atomic partition satisfies the $p-value$ threshold}
                % \State Update p-value $\in$ $r_{g}^{temp}$
                % check if $p-value_{C_{m}, r_{g\_temp\_2}} \leq \alpha/(n+1)$
            % \EndFor
            \State{\textbf{return} flag}
        \EndProcedure
\end{algorithmic}
\end{algorithm}
\setlength{\textfloatsep}{5pt}

Figure \ref{fig:toy_example} shows an execution trace of merging $4$ neighboring partitions. Each region has a participation index and a $p$-$value$ computed individually. Then, Steps 1-4 show the process of combining these partitions based on either additional statistical significance tests and participation index computations (for SSRCM) or using a tighter $p$-$value$ threshold for MultComp-RCM. Table \ref{tab:compare} compares the number of computations for the two approaches and clearly shows the lower computational requirements of MultComp-RCM. When performing the union of two regions using MultComp-RCM, the new threshold as per the Bonferroni correction is applied to each of the two regions (as in procedure BONF\_CHECK in Algorithm \ref{algorithm3}) for a successful union.
\begin{figure}[htp]
\centering
\graphicspath{ {./images/} }
{\includegraphics[width=0.50\linewidth]{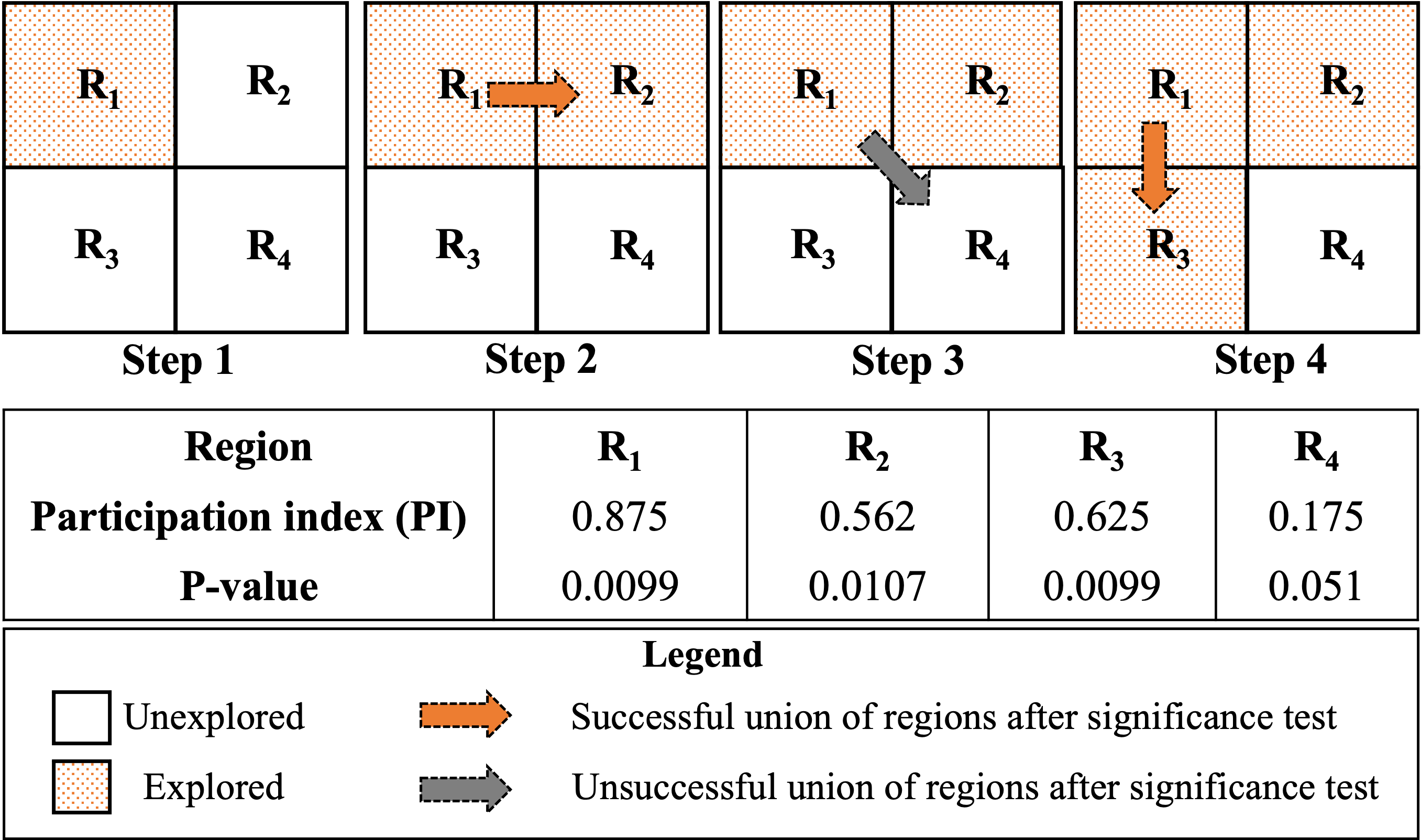}\label{fig:comparison_steps}}
\caption{Execution trace of SSRCM and  MultComp-RCM.}\label{fig:toy_example}
\end{figure}

\begin{table}[h]
\footnotesize
\caption{Comparing the cumulative number of statistical significance tests (C\#), participation index computation (pi cal.), and p-value thresholds (p-val th.) between $SSRCM$ (denoted as $S$) and MultComp-RCM (denoted as $R$).}\label{tab:compare}
\begin{tabular}{|p{0.60cm}|p{0.65cm}|p{0.65cm}|p{1.2cm}|p{1.2cm}|p{1.65cm}|p{1.65cm}|}
\hline 
Steps & C\#$S$ & C\#$R$ & \emph{pi} cal. S & \emph{pi} cal. R & \emph{p-val.} th. $S$ & \emph{p-val.} th. $R$\tabularnewline
\hline 
0 & 4 & 4 & 4 & 4 & 0.05 & 0.05\tabularnewline
\hline 
1 & 5 & 4 & 5 & 4 & 0.05 & 0.05\tabularnewline
\hline 
2 & 6 & 4 & 6 & 4 & 0.05 & 0.025 \tabularnewline
\hline 
3 & 7 & 4 & 7 & 4 & 0.05 & 0.0167\tabularnewline
\hline 
4 & 8 & 4 & 8 & 4 & 0.05 & 0.0125\tabularnewline
\hline 
\end{tabular}
\end{table}

\section{Theoretical Analysis}
\label{sec:theoretical_analysis}
\begin{lemma}
 $MultComp-RCM$ has lower or equal Type-I error than $SSRCM$.
\end{lemma}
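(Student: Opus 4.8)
The plan is to phrase Type-I error as the family-wise error rate (FWER) under the null hypothesis $H_0$ of complete spatial randomness, namely the probability of declaring \emph{at least one} candidate region statistically significant for a pattern $C$ when no true spatial interaction exists. First I would note that both miners are identical up to the region-merging phase: each atomic partition must first clear the significance test of Algorithm~\ref{algorithm1} at level $\alpha$ in order to enter the neighborhood graph $G$. Consequently the two algorithms can differ in Type-I error \emph{only} through their merging rules, and it suffices to compare the fresh level-$\alpha$ union test used by SSRCM (Algorithm~\ref{algorithm2}) with the Bonferroni rule \textsc{Bonf\_Check} used by MultComp-RCM (Algorithm~\ref{algorithm3}). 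Because the two rules evaluate different statistics — SSRCM recomputes a single $pi$ and $p$-value on the merged region, whereas MultComp-RCM reuses the per-partition $p$-values against a stricter threshold — there is no event-wise containment between their acceptance sets, so the comparison must be made probabilistically.

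Next I would bound the error of MultComp-RCM from above. For a region of $n$ partitions, \textsc{Bonf\_Check} accepts only if \emph{every} constituent partition's individual $p$-value satisfies $p_i \le \alpha/n$. Let $A_i$ be the event that partition $i$ spuriously clears this corrected threshold. Since each Monte Carlo $p$-value of Equation~\eqref{PIObs} is a valid (super-uniform) $p$-value, $\Pr(A_i \mid H_0) \le \alpha/n$, and Boole's inequality gives
\begin{equation}
\label{eq:bonf-fwer}
\mathrm{FWER}_{\text{MultComp}} \;\le\; \Pr\!\Big(\textstyle\bigcup_{i=1}^{n} A_i \,\Big|\, H_0\Big) \;\le\; \sum_{i=1}^{n}\frac{\alpha}{n} \;=\; \alpha ,
\end{equation}
and this holds for every $n$ and \emph{without} any assumption on the dependence among partitions.

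Then I would bound the error of SSRCM from below. SSRCM forms its region through a sequence of significance tests, each carried out at the fixed, uncorrected level $\alpha$; since the FWER is at least the false-positive probability of any single one of these calibrated tests, $\mathrm{FWER}_{\text{SSRCM}} \ge \alpha$. To expose the growth that constitutes the multiple comparisons problem, I would additionally assume the $m$ tests used to build an $n$-partition region are independent (the worst case for the baseline and the standard premise of Bonferroni analysis), yielding
\begin{equation}
\label{eq:ssrcm-fwer}
\mathrm{FWER}_{\text{SSRCM}} \;=\; 1-(1-\alpha)^{m},
\end{equation}
which equals $\alpha$ only when $m=1$ and increases strictly toward $1$ with the number of partitions. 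Chaining \eqref{eq:bonf-fwer} and \eqref{eq:ssrcm-fwer} yields $\mathrm{FWER}_{\text{MultComp}} \le \alpha \le \mathrm{FWER}_{\text{SSRCM}}$, which is the claimed inequality.

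The hardest part will be making the two bounds meet rigorously in the boundary regime rather than merely sandwiching them around $\alpha$. Because the Monte Carlo $p$-value of Equation~\eqref{PIObs} is conservative, the single-test error can fall slightly below $\alpha$, so I must argue that the union test of SSRCM is (approximately) calibrated at level $\alpha$ to pin $\mathrm{FWER}_{\text{SSRCM}} \ge \mathrm{FWER}_{\text{MultComp}}$; this is where I would invoke the independent-scattering property of the Poisson point process used to generate the null hypotheses, which makes the per-partition tests close to independent and each union test close to exact. Handling this calibration, together with the absence of event-wise containment noted above, is the main obstacle; the union-bound step \eqref{eq:bonf-fwer} is routine and, pleasingly, the conjunctive form of \textsc{Bonf\_Check} makes it distribution-free.
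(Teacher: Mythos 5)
Your proposal is correct in substance and establishes the lemma by a genuinely different route than the paper. The paper's proof is essentially a derivation of the Bonferroni threshold itself: assuming the $n$ per-partition tests are independent, it computes the probability of no false positive as $(1-\alpha)^n$, solves $\alpha = 1-(1-\pi)^{1/n}$ for a target experiment-wide rate $\pi$, applies the binomial approximation to get $\alpha = \pi/n$, and then concludes that since each per-test level satisfies $\alpha \leq \pi$, every region output by MultComp-RCM has lower Type-I error; notably, it never formally bounds SSRCM's error at all, leaving the comparative claim implicit. You instead prove both halves of the comparison: Boole's inequality gives the MultComp-RCM bound $\mathrm{FWER} \leq \alpha$ with no independence assumption and no approximation (strictly stronger than the paper's \v{S}id\'{a}k-style heuristic, and in fact your conjunctive reading of \textsc{Bonf\_Check} — all constituents must clear $\alpha/n$ — would even yield the sharper bound $\alpha/n$ for a merged region), while the lower bound $1-(1-\alpha)^m \geq \alpha$ for SSRCM's uncorrected union tests makes explicit the growth the paper only gestures at. Your attention to the super-uniformity of the Monte Carlo $p$-value of Equation~\eqref{PIObs}, and to the exact calibration afforded by the choice $\alpha(R+1)=5$, addresses a point the paper silently passes over. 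One caveat: since both miners share the level-$\alpha$ atomic-partition stage and may each output a single partition tested only at $\alpha$, your inequality $\mathrm{FWER}_{\text{MultComp}} \leq \alpha$ cannot hold for the full family of outputs under the global null; it must be read, as your own reduction sets up, as a statement about the merging phase, and strictly speaking passing from merging-phase dominance to total-error dominance needs a further (routine but not automatic) monotonicity step, since $\Pr(E_1 \cup E_2) \leq \Pr(E_1 \cup E_2')$ does not follow from $\Pr(E_2) \leq \Pr(E_2')$ alone. This caveat equally afflicts the paper's version, which is looser on every other score, so your route both proves more and assumes less.
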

% By $Type-I$ error, we mean that the whole region output by the algorithm as well as the atomic partitions which comprise the output region has a low false positive rate for a candidate regional colocation pattern.
\begin{proof}
Algorithm \ref{algorithm1} called by Algorithm \ref{algorithm2} and \ref{algorithm3} in line 7 extracts atomic partitions within which a regional-colocation pattern is statistically significant.

The Bonferroni correction in procedure BONF\_CHECK in Algorithm \ref{algorithm3} controls the experiment-wide false positive rate ($\pi$) by specifying the significance level ($\alpha$) for each test, where a test is significant if $p-value\leq\alpha$. The probability of no Type $I$ error (false positives) in $n$ independent tests is $(1-\alpha)^{n}$, if each test is at level $\alpha$. Therefore, the probability of at least one false positive $\pi$ is $1-(1-\alpha)^{n}$. For an experiment-wide false positive rate of $\pi$, the $\alpha$ for each test should be $\alpha=1-(1-\pi)^{1/n}$.
Using binomial approximation, $(1-\alpha)^{n}\simeq1-n\alpha$, which gives $\alpha=\pi/n$. For an experiment-wide false positive value $\pi=0.05$, the $\alpha$ (false positive rate for each test) should be less than $\pi,$ i.e. $\alpha\leq\pi$. Therefore each region and sub-region output by MultComp-RCM has lower Type-I and a precision close to $1$.
\end{proof}

% \begin{lemma}
% MultComp-RCM is Type-I complete.
% \end{lemma}
% \begin{proof}
% $SSRCM$ emphasizes detection of patterns which have a higher statistical confidence, thus lowering the Type-I error (false positive). This does result in occasions where a false null hypothesis is accepted over the alternate hypothesis, thus leading to false negatives. In most domains related to regional-colocation pattern detection, Type-I error (which results in false pattern discoveries) has a bigger impact over Type-II and thus requires more emphasis to be dealt with.
% \end{proof}

\begin{lemma}
\label{lemma:Cost} 
MultComp-RCM has lower or equal computational cost than $SSRCM$ for all observed data, where Bonferroni-revised p-values eliminate lower confidence candidates considered by the original p-value, i.e. $Cost_{MultComp-RCM} \leq Cost_{SSRCM}$.
\end{lemma}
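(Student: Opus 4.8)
The plan is to bound the cost by decomposing each miner into an \emph{initialization} phase that screens every atomic partition and a \emph{union} phase that grows the output region along a traversal of the neighborhood graph $G$, and then to show that the phases cost identically in the first case and never favor $SSRCM$ less than MultComp-RCM in the second. First I would fix a cost model consistent with Table \ref{tab:compare}: charge for each regional significance test (each firing $R$ Monte Carlo simulations), each participation-index computation, and each scalar p-value comparison, and take a participation-index computation on a region spanning $k$ atomic partitions to cost $\Omega(k)$, since the instances of those $k$ partitions must at least be inspected.

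For the initialization phase I would observe that both Algorithm \ref{algorithm2} and Algorithm \ref{algorithm3} invoke Algorithm \ref{algorithm1} in line $7$ on the same atomic partitions, with the same pattern $C_m$, distance $d$, and uncorrected level $\alpha$. Hence both produce the identical list of significant atomic partitions, the identical graph $G$, the identical root $r_g^{maxPI}$, and the identical per-partition p-values, at one common cost $C_0$. This is the point where I would record that the p-values later consumed by BONF\_CHECK are exactly these already-computed values, so MultComp-RCM needs no fresh test to obtain them.

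The core of the argument is a step-by-step coupling of the two executions along the \emph{same} depth-first traversal of $G$ (identical because $G$ is identical), over the connected component rooted at $r_g^{maxPI}$ with, say, $V$ vertices. Both make the same sequence of merge attempts on the $V-1$ non-root vertices. At the attempt for a vertex $r_g$, $SSRCM$ forms $r_g^{temp}$ and calls Significance Testing on it, paying at least one participation-index computation on the observed union of size $|r_g^{temp}|$ plus $R$ more inside the null hypotheses; MultComp-RCM instead calls BONF\_CHECK, which only rescans the stored p-values of the $|SG|$ nodes of the current subgraph against the tightened threshold $\alpha/(n+1)$, performing no significance test and no participation-index computation. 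Invoking the lemma's hypothesis that the Bonferroni-revised (stricter) threshold retains only a subset of the partitions that $\alpha$ would retain, I get that $r_g^{final}$ in MultComp-RCM is contained in $r_g^{final}$ in $SSRCM$ at every step, hence $|SG| \leq |r_g^{temp}|$. Therefore the $|SG|$ comparisons of one BONF\_CHECK are dominated by the single $\Omega(|r_g^{temp}|)$ observed-data participation-index computation that $SSRCM$ performs in the same step, so the MultComp-RCM step cost is at most the $SSRCM$ step cost. Summing the identical $C_0$ and the per-step inequalities over the traversal yields $Cost_{MultComp-RCM} \leq Cost_{SSRCM}$.

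The step I expect to be the main obstacle is securing the inequality \emph{for all observed data} rather than in expectation, and this rests on two things I would argue carefully. The first is the traversal-identity and nesting claim: I must rule out the adversarial possibility that a stricter threshold paradoxically enlarges the explored region and thus manufactures extra work, which the subset relation from the hypothesis forecloses, since the traversal order over the fixed $G$ is unchanged and MultComp-RCM's region is never larger. The second is the cost-model justification that the accumulated $\sum_k |SG| = O(V^2)$ scalar comparisons never overturn the inequality; the clean way to close this is the per-step domination above, so that I never have to weigh aggregate comparison counts against aggregate Monte Carlo work, but only a single scan against a single observed-data participation-index computation of equal-or-greater size.
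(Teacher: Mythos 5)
Your proposal is correct and follows essentially the same route as the paper's proof: both decompose the cost into a common initialization phase, in which the two miners invoke Algorithm~\ref{algorithm1} identically on all atomic partitions (hence equal cost and identical significant-partition lists, graph $G$, and stored p-values), and a union phase in which $SSRCM$ pays a fresh participation-index computation plus a full significance test per merge attempt while MultComp-RCM pays only scalar comparisons of already-computed p-values against the tightened threshold. The paper formalizes this as $Cost_{SSRCM}=N_{1}(C_{pi}(d_{1})+C_{st}(pi_{obs},pi_{null},d_{1}))+N_{2}(C_{pi}(d_{2})+C_{st}(pi_{obs},pi_{null},d_{2}))$ versus $N_{1}(C_{pi}(d_{1})+C_{st}(pi_{obs},pi_{null},d_{1}))+N_{2}$ for MultComp-RCM, and stops there.

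One respect in which you are more careful than the paper: the paper charges MultComp-RCM a flat $N_{2}$ for the Bonferroni phase, i.e., one unit-cost comparison per significant partition, whereas the pseudocode's BONF\_CHECK rescans every node of the current subgraph $SG$ at each merge attempt, so the true comparison count is $\sum_{k}\lvert SG_{k}\rvert$, potentially quadratic in the component size. Your per-step domination argument --- each $\lvert SG\rvert$-comparison scan is dominated by the single $\Omega(\lvert r_{g}^{temp}\rvert)$ observed-data participation-index computation that $SSRCM$ performs in the coupled step, via the nesting $\lvert SG\rvert\leq\lvert r_{g}^{temp}\rvert$ drawn from the lemma's hypothesis --- closes this bookkeeping gap that the paper's proof glosses over. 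Your one mild over-caution is the worry that the stricter threshold could enlarge the explored region: since both miners traverse the same fixed $G$ built during the common initialization, the number of merge attempts is identical regardless of which merges succeed, so traversal identity holds unconditionally; the hypothesis is needed only for the size bound $\lvert SG\rvert\leq\lvert r_{g}^{temp}\rvert$, exactly where you use it.
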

\begin{proof}
Let $C_{pi}(d)$ be the complexity of participation index $(pi)$ computation for a specific region (dependent on the data $d$). Let $C_{st}(pi_{obs}, pi_{null}, d)$ be the complexity of significance testing for a specific region (dependent on the $pi$ in observed data $d$ and the null hypothesis). Assume $N_{1}$ is the number of space partitions/regions in the dataset, $N_{2}$ is the number of space partitions extracted from Algorithm \ref{algorithm1}, and $N_{2}\leq N_{1}$. Further, assume $d_{1}$ is the initial dataset and $d_{2}$ is the dataset in each iteration in the SSRCM. Then, the cost of $SSRCM$ is  
\begin{equation}
    N_{1}(C_{pi}(d_{1})+C_{st}(pi_{obs},pi_{null},d_{1}))+N_{2}(C_{pi}(d_{2})+C_{st}(pi_{obs},pi_{null},d_{2}))
\end{equation}
By contrast, the cost of the proposed MultComp-RCM approach is only $N_{1}(C_{pi}(d_{1})+C_{st}(pi_{obs},$ $pi_{null},d_{1}))+N_{2}$.
Here, $N_{2}$ represents the number of significant partitions for which the p-value needs a comparison against the threshold obtained from Bonferroni correction.
\end{proof}

\section{Experimental Evaluation}
\label{sec:Experiments}
We had three goals for the experiments: (1) To compare the time taken by $SSRCM$ and MultComp-RCM with varying numbers of regional-colocation instances, varying number of atomic partitions, and change in the number of feature instances. (2) To compare the number of significance tests, $pi$ calculations for a varying number of regions. (3) To compare solution quality between $SSRCM$ and MultComp-RCM. 

% (4) To evaluate effect of varying different parameter values on the individual algorithms.
% \subsection{Experiment design:}
\textbf{Experiment design:} Figure \ref{fig:validation_framework} shows the overall validation framework. The metric for comparing the solution quality of $SSRCM$ with MultComp-RCM was the false positive rate (FPR), while the runtime comparisons were based on the execution time (in seconds) of the individual algorithms. The experiments were done on both real (Safegraph POI) and synthetic data to perform both comparative and sensitivity analysis.
\begin{figure}[htbp]
  \centering
  \graphicspath{ {./images/} }
  \includegraphics[width=0.80\linewidth]{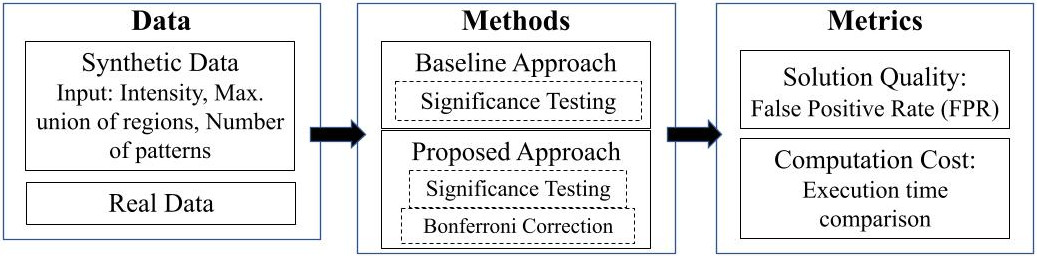}
  \caption{Overall validation framework}
  \label{fig:validation_framework}
\end{figure}

% \subsection{Synthetic data generation:}
% \label{sec:Synthetic}
\textbf{Synthetic data generation:} We began with a space partitioning $(R_{g})$, a maximum union (or traversal) of regions $(L_{max})$, and a number of regional-colocation patterns i.e., pairs of $<r_{g},C>$. We then generated reference points within the partitions using the Poisson point process. At each reference point, we generated circles of diameter $d_g$ which was determined empirically for each region in $R_{g}$ in the observed dataset. The diameter signifies the smallest distance between features in a colocation $C$ at which they become statistically significant regional colocations. We populated each
circle with instances of $C$. We note that the circles were only used to place colocated instances in a region and were not separate partitioning. Figure \ref{fig:syn_data_gen} shows the process of synthetic data generation.

\begin{figure}[htbp]
  \centering
  \graphicspath{ {./images/} }
  \includegraphics[width=0.60\linewidth]{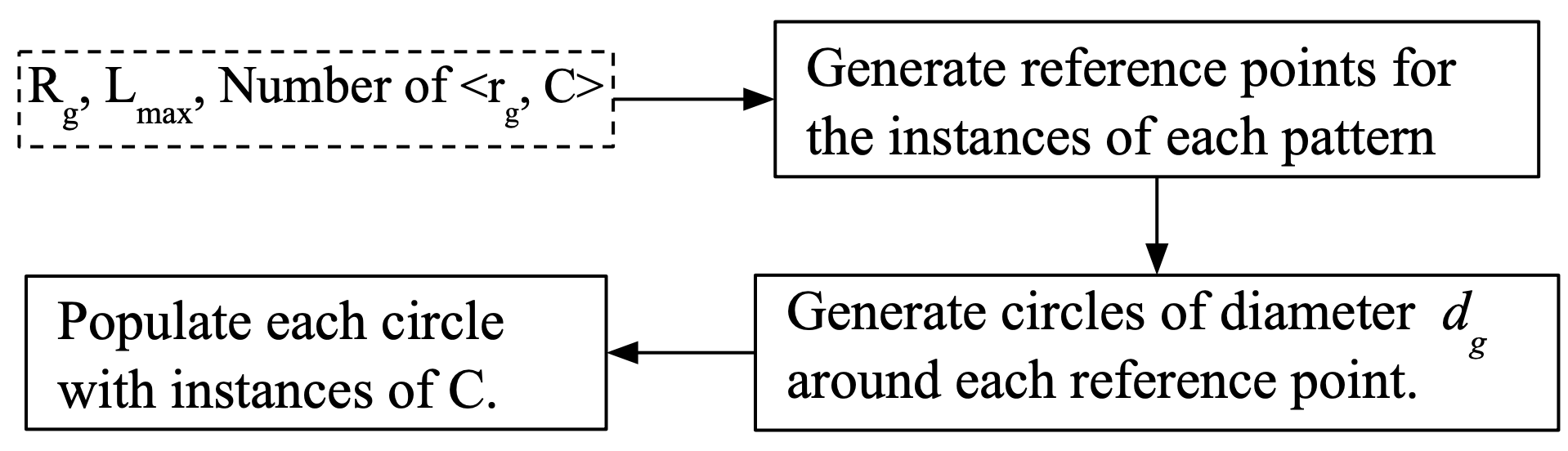}
  \caption{Synthetic data generation process.}
  \label{fig:syn_data_gen}
\end{figure}

% \subsection{Comparative Analysis:}
\textbf{Comparative Analysis:} Figure \ref{subfig:syn1} shows the time taken (in log scale) for different regional-colocation instances. For this experiment, we varied the number of regional-colocation instances in each atomic partition from $4$ to $84$ while keeping other parameters (like the number of regions) constant and record the execution time of both algorithms. Figure \ref{subfig:syn2} compares the execution time with a varying number of atomic partitions (or regions) while keeping the number of regional-colocation instances in each partition constant. Figure \ref{subfig:syn3} shows the time taken with a varying number of feature instances (which constitute the regional-colocation pattern) in each region while keeping the number of regions constant. In all experiments, $MultComp$-$RCM$ is much faster than the baseline $SSRCM$. These results are consistent with Lemma \ref{lemma:Cost}, which says that $Cost_{MultComp-RCM}\leq Cost_{SSRCM}$.

% \begin{figure*}[h]
% \centering
% \subfigure[Number of col. instances]{\includegraphics[width = 3.2cm]{images/syn1.png}\label{subfig:syn1}}
% \subfigure[Number of regions]{\includegraphics[width = 3.2cm]{images/syn2.png}\label{subfig:syn2}}
% \subfigure[Number of feature instances]{\includegraphics[width = 3.2cm]{images/syn3.jpg}\label{subfig:syn3}}
% \subfigure[Number of significance tests.]{\includegraphics[width = 3.1cm]{images/syn4.jpg}\label{subfig:syn4}}
% \subfigure[Number of participation index calculations.]{\includegraphics[width = 3.2cm]{images/syn5.jpg}\label{subfig:syn5}}

% \caption{MultComp-RCM outperforms SSRCM \cite{10.1145/3557989.3566158}}
% \label{fig:sensitivity}
% \vspace{-10pt}
% \end{figure*}

% \begin{figure*}[h]
% \centering
% \subfloat[Number of reg. col. instances]{\includegraphics[width = 4.3cm]{images/syn1}\label{subfig:syn1}}\hfill
% \subfloat[Number of regions]{\includegraphics[width = 4.3cm]{images/syn2}\label{subfig:syn2}}\hfill
% \subfloat[Number of feature instances]{\includegraphics[width = 4.3cm]{images/syn3}\label{subfig:syn3}}\vfill
% \subfloat[Number of significance tests.]{\includegraphics[width = 4.3cm]{images/syn4}\label{subfig:syn4}}\hspace{0.1\textwidth}
% \subfloat[Number of participation index calculations.]{\includegraphics[width = 4.3cm]{images/syn5}\label{subfig:syn5}}

% \caption{MultComp-RCM outperforms SSRCM \cite{10.1145/3557989.3566158}}
% \label{fig:sensitivity}
% \end{figure*}

\begin{figure*}[h]
\centering
\subfloat[Number of reg. col. instances]{\includegraphics[width = 4.2cm]{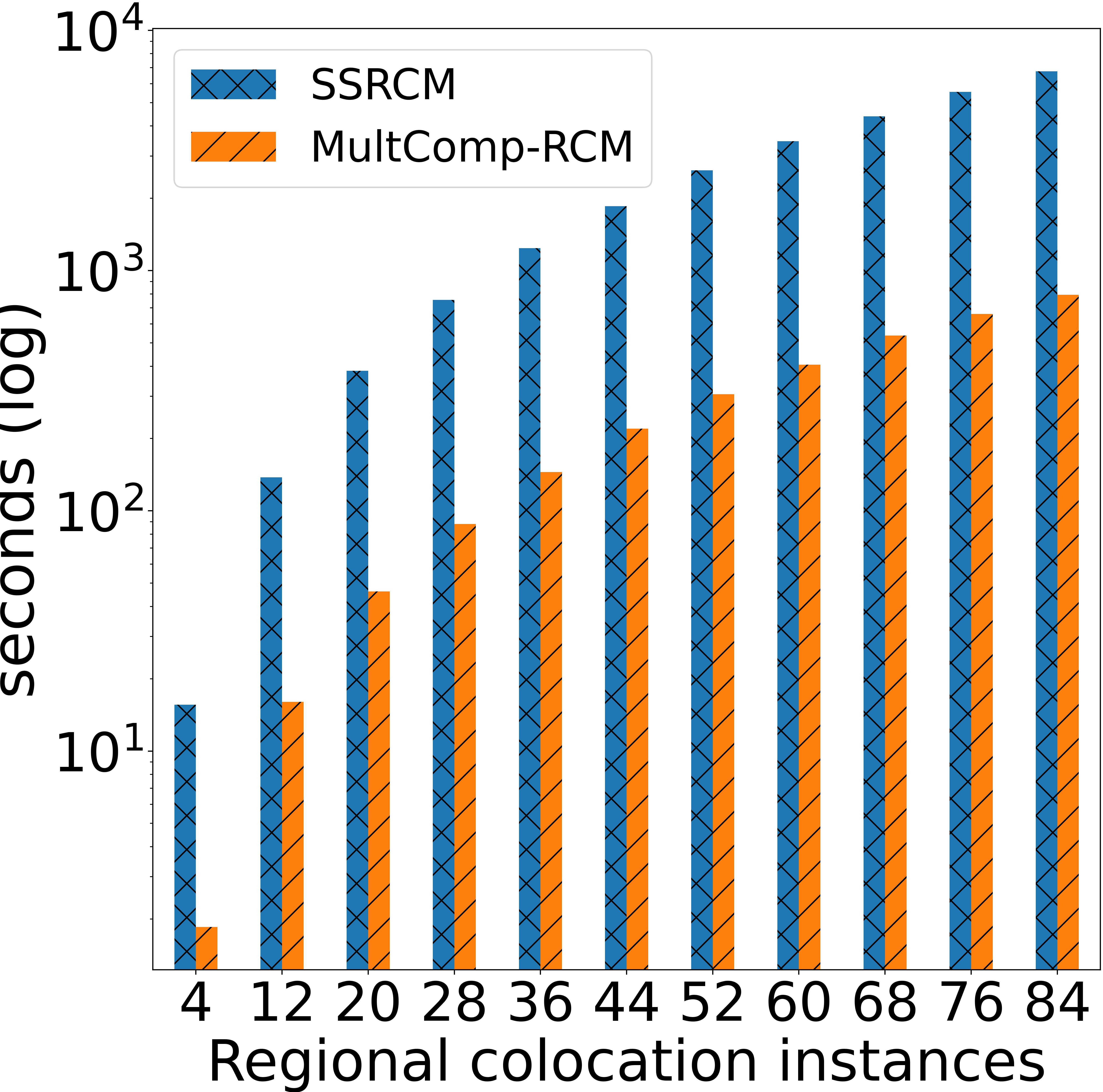}\label{subfig:syn1}}\hfill
\subfloat[Number of regions]{\includegraphics[width = 4.3cm]{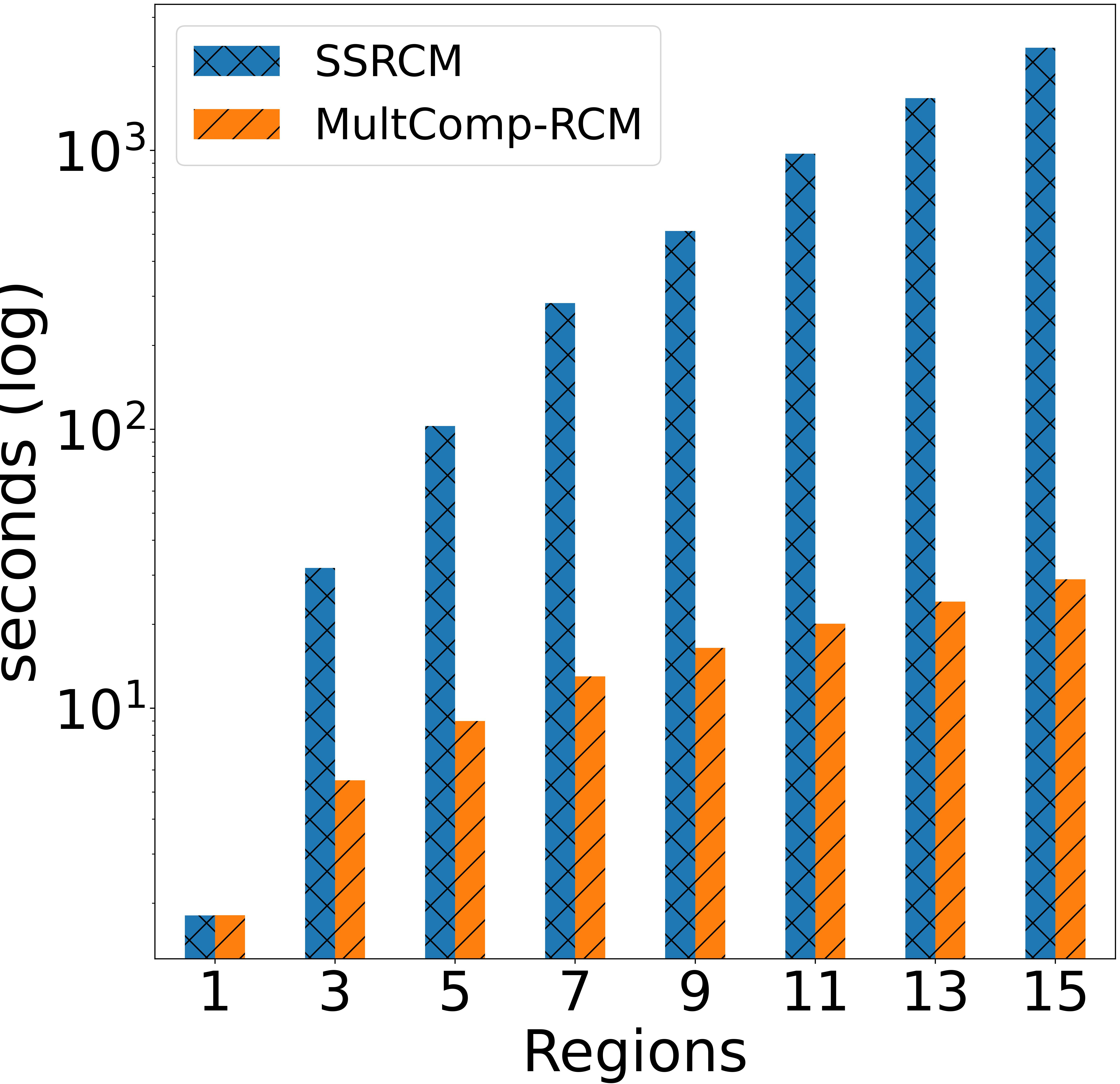}\label{subfig:syn2}}\hfill
\subfloat[Number of feature instances]{\includegraphics[width = 4.2cm]{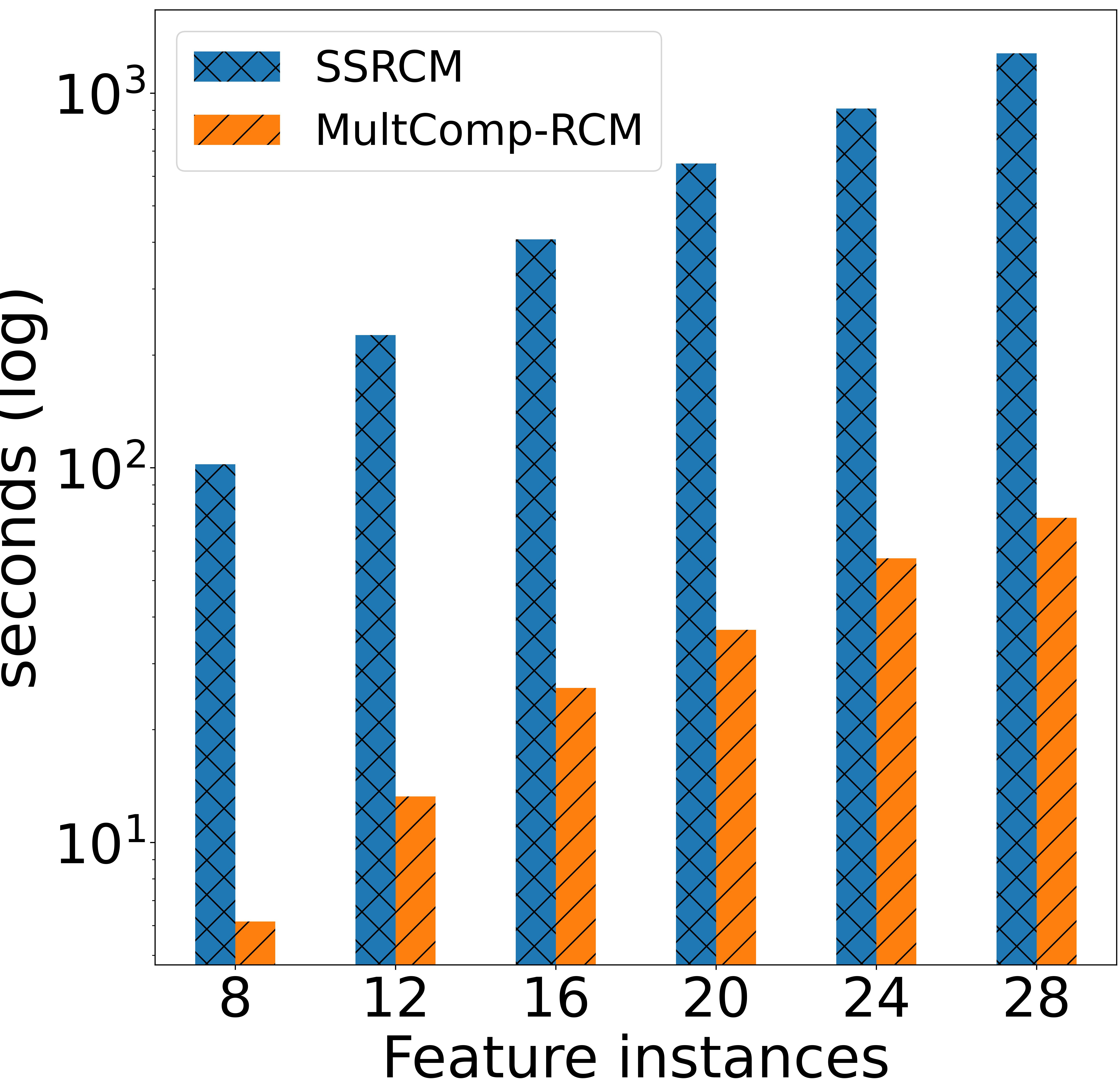}\label{subfig:syn3}}\vfill
\subfloat[Number of significance tests]{\includegraphics[width = 4.2cm]{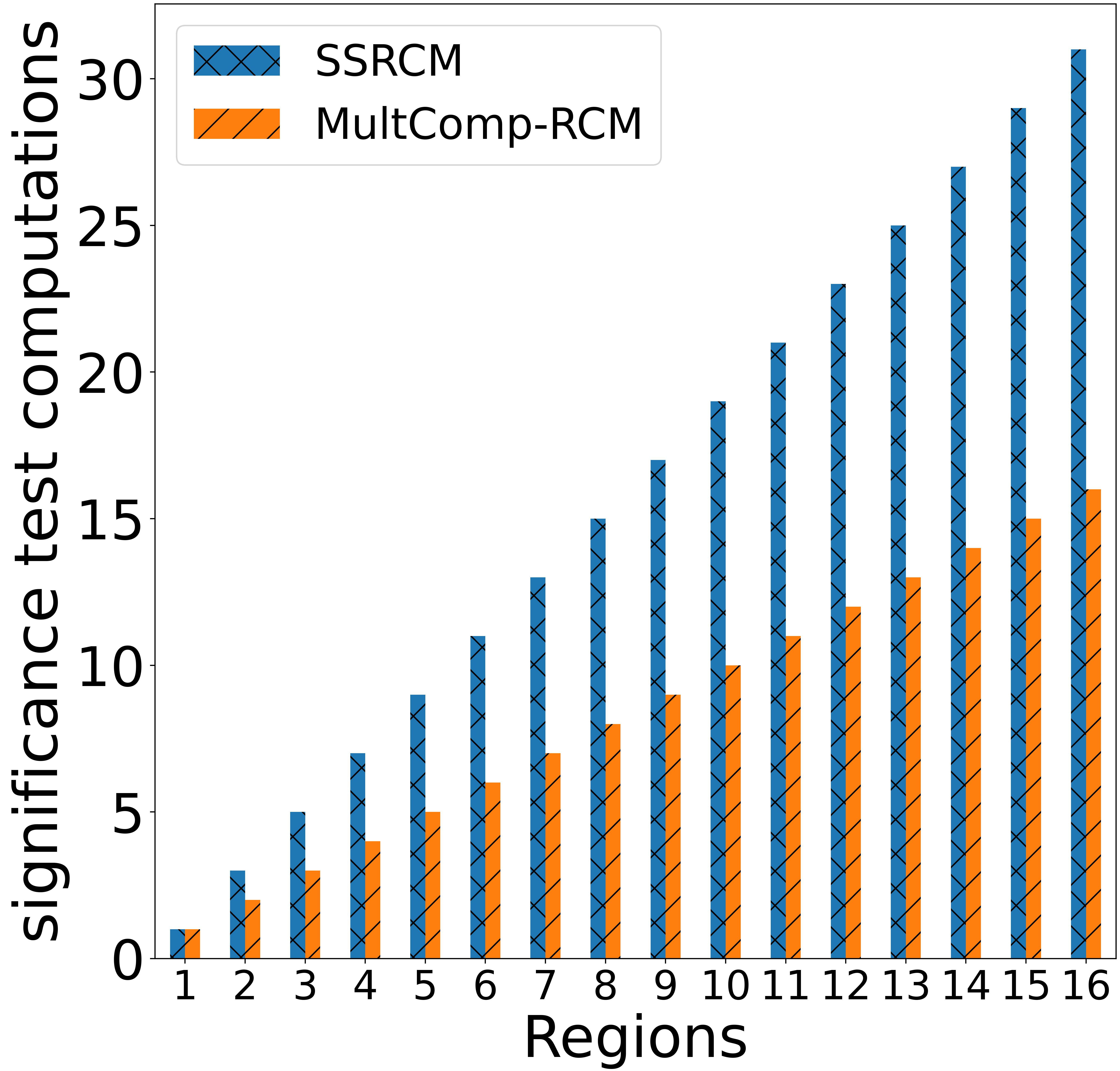}\label{subfig:syn4}}\hspace{0.1\textwidth}
\subfloat[Number of PI calculations]{\includegraphics[width = 4.3cm]{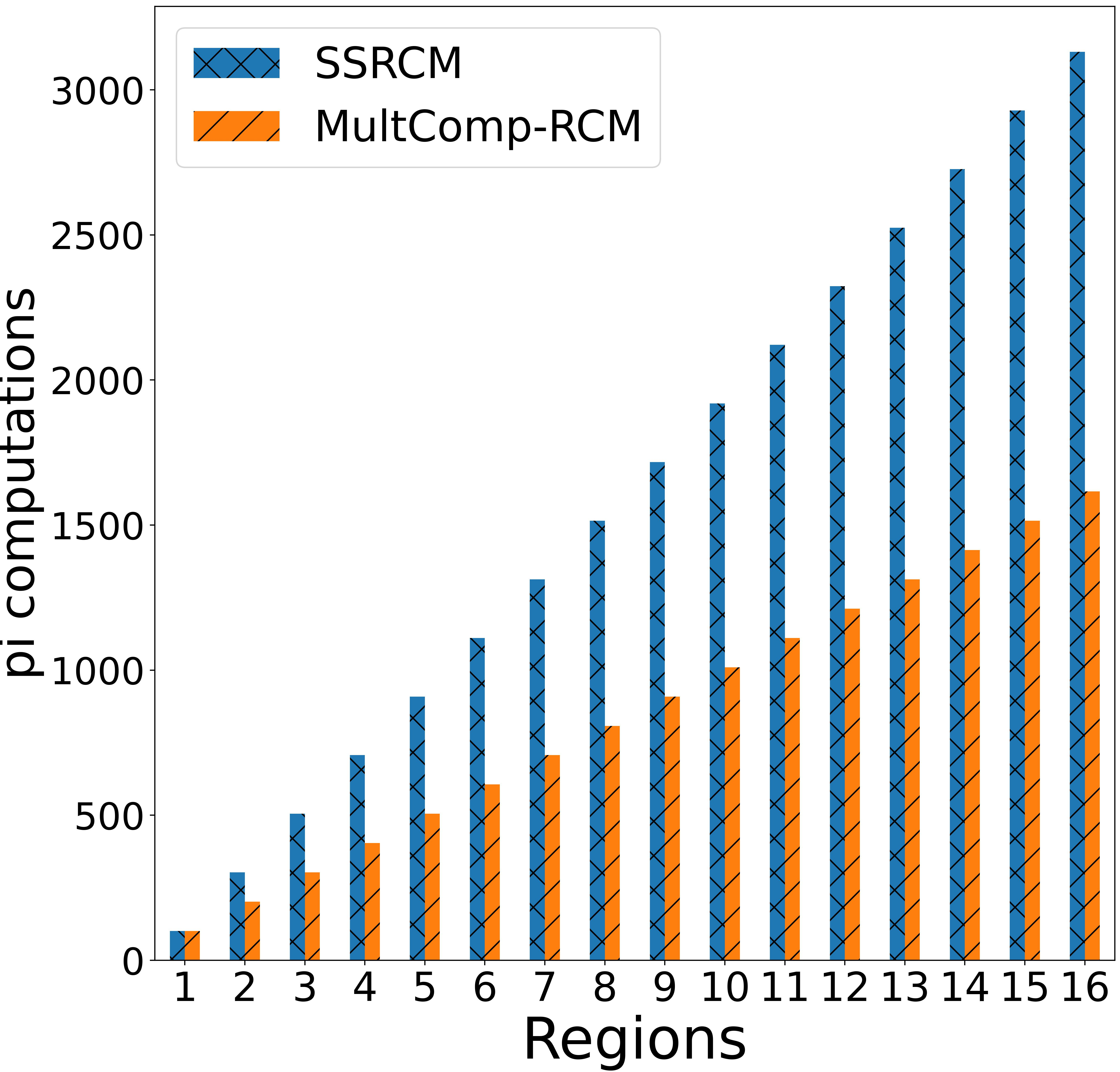}\label{subfig:syn5}}

\caption{MultComp-RCM outperforms SSRCM \cite{10.1145/3557989.3566158}}
\label{fig:sensitivity}
\end{figure*}

% \subsection{Sensitivity Analysis:}
\textbf{Sensitivity Analysis:} Figure \ref{subfig:syn4} shows the number of significance tests performed by both algorithms with varying number of regions, while keeping the number of regional-colocation instances constant in each partition. Figure \ref{subfig:syn5} shows the number of participation index computations performed with varying number of regions with the same constant parameters as above. In both cases, the proposed MultComp-RCM requires lesser number of significance tests and participation index computations for an increasing number of regions.

% \subsection{Solution Quality:} 
\textbf{Solution Quality:} We performed controlled experiments on synthetic datasets to compare the solution quality of $MultComp$-$RCM$ with $SSRCM$. Metric for comparison was the false positive rate (FPR).

\noindent $FPR=\frac{FP}{FP+TN}$, where $FP$ is the number of false positives, and $TN$ is the number of true negatives. Table \ref{tab:sol_qual} shows the experiment results. As shown MultComp-RCM exhibits a lower rate for false pattern discovery than $SSRCM$. This is mainly because MultComp-RCM eliminates regions which barely pass the atomic significance test (borderline statistical confidence) in Algorithm \ref{algorithm1}, which SSRCM fails to reject in the final output.

\begin{table}[tbhp]
\centering
{
\footnotesize
\caption{MultComp-RCM generates less false positives}\label{tab:sol_qual}
\begin{tabular}{|p{1.0cm}|p{4.2cm}|p{4.5cm}|}
 \hline
%   & \multicolumn{2}{c|}{\textbf{False positive rate}}\\ \hline
 \textbf{Pattern} & \textbf{SSRCM False Positive Rate} & \textbf{MultComp-RCM False Positive Rate} \\ \hline
 {A, B, C} & 0.15 & 0.03\\\hline
 {A, B} & 0.17 & 0.01\\ \hline
 {B, C} & 0.14 & 0.01\\ \hline
 {A, C} & 0.19 & 0.04\\ \hline
\end{tabular}
}
\end{table}

\section{Case Study}
\label{subsec:case-study}
We extended our previous case study \cite{10.1145/3557989.3566158} to show the effectiveness of the proposed approach. 

\noindent\textbf{Dataset}: We used data from SafeGraph, a mobility data vendor who provides anonymized aggregated location data to researchers studying the effects of COVID-19 on citizen mobility patterns towards numerous Points Of Interest (POIs). The dataset consists of 1473 retail brands in Minnesota. Experiments were performed on colocation patterns consisting of two (e.g., Jimmy John’s, McDonald’s) or three (e.g., Jimmy John’s, McDonald’s, Subway) features. Our null hypothesis generation followed the procedure described in Section \ref{sec:stat_sig_test_basic_concepts}.

% which could be considered as interchangeable
%  with $2$, $12$, $49$ and $5$ regional-colocation instances of the pattern detected in each county respectively
\noindent \textbf{Case Study Results:} The pattern $C := \{Jimmy John's, McDonald's, Subway\}$ was found to be statistically significant when the distance between feature instances was about $1400$ meters. The regional footprint was the union of `Dakota' and `Hennepin' Counties. The $pi$ values in the counties were $0.34$ and $0.45$ respectively. The $p$-$value$ for the pattern within the counties were $0.02$ and $0.01$, satisfying the $p$-$value$ threshold of $\frac{0.05}{2}$ as per the Bonferroni correction for the two partitions. A few additional significant patterns are shown in Table \ref{tab:case_study_results} (values rounded to two decimal places).

% Figure \ref{fig:cb_sb_hennepin} shows the locations of the regional colocation pattern $<$Hennepin, (Caribou\ Coffee, Starbucks)$>$. The positive spatial interaction between the features, i.e. Caribou Coffee and Starbucks in Hennepin County, makes the pattern statistically significant. The cluster in the bottom right of the figure shows the Minneapolis downtown region. MultComp-RCM eliminates lower confidence regions such as Ramsey and Scott Counties which were detected by SSRCM \cite{10.1145/3557989.3566158}.

\begin{table}[tbhp]
{
\centering
\footnotesize
\caption{Regional-colocation patterns found to be statistically-significant at distance $d$.}\label{tab:case_study_results}
\begin{tabular}{|p{5.3cm}|p{6.5cm}|p{0.95cm}|}
 \hline
 \textbf{Colocated features} & \textbf{Counties ($participation index,p-value$)} & $d$ \\ \hline
 \{ Caribou coffee, Starbucks\} & Hennepin ($0.34,0.01$) & $200$ m \\\hline
 \{ Caribou coffee, Starbucks\} & Carver ($0.5,0.02$), Hennepin ($0.51,0.01$), Washington ($0.41,0.01$) & $400$ m \\\hline
  \{ Caribou coffee, Starbucks, Dunn Bros\} & Hennepin ($0.52,0.01$) & $1900$ m \\\hline
  \{ Caribou coffee, Starbucks, Dunn Bros\} & Hennepin ($0.72,0.01$), Washington ($0.36,0.02$) & $3000$ m \\\hline
  \{ Jimmy John's, McDonald's\} & Hennepin ($0.39,0.01$)& $500$ m \\\hline
  \{ Jimmy John's, McDonald's\} & Dakota ($0.36,0.02$), Hennepin ($0.51,0.01$) & $700$ m \\\hline
  \{ Jimmy John's, McDonald's, Subway\} & Dakota ($0.34,0.02$), Hennepin ($0.45,0.01$) & $1400$ m \\\hline
  \{ Jimmy John's, McDonald's, Subway\} & Dakota ($0.47,0.02$), Hennepin ($0.57,0.01$), Washington ($0.43,0.02$) & $1500$ m \\\hline
\end{tabular}
}
\end{table}

In our previous paper \cite{10.1145/3557989.3566158}, we compared SSRCM with the Quad and QGFR algorithms \cite{li2018local} whose data-aware space partitioning approach is based on the minimum orthogonal bounding rectangle ($MOBR$). We found that the MOBR-based approach with a participation index threshold of $0.6$ produced $3368$ potential localities for the pattern \{$r_{g}$, [Caribou Coffee, Starbucks]\}. With a confidence level of $95\%$, MOBR-based approach resulted in $2917$ significant and $451$ non-significant patterns. Hence, a regional-colocation miner without statistical significance may enumerate output regions where colocations occurred by chance.

% Quad uses a threshold ($\theta$) on the participation index ($pi$) for a candidate pattern $C$ and enumerates MOBRs within which the pattern satisfies the $pi$ threshold and there are at least $\gamma$ instances of the pattern $C$. QGFR uses a new measure, a $MaxPI$ bound, which serves as the smallest upper bound on the participation ratio ($pr$) of any feature in  colocation $C$. 

\section{Related Work and Discussion}
\label{sec:related}
\noindent \textbf{Related Work:} The concept of colocation was introduced by Shekhar et al. \cite{shekhar2001discovering}. Huang et al. \cite{huang2004discovering} provided extensive experiments and rigorous discussions regarding the topic and the participation index as a prevalence measure between constituent features. Later, Barua et al. \cite{barua2013mining} introduced statistical significance testing in global colocation and segregation pattern detection to avoid enumeration of chance patterns in the dataset for both aggregation and segregation patterns but did not mention patterns that are regional (or local). 
Regional colocation with minimum orthogonal bounding rectangle (MOBR) based approach was studied by Li et al. \cite{li2018local} while \cite{Wang2013RegionalCO} and \cite{Celik2007ZonalCP} focused on shapes and zonal patterns, respectively. These methods utilized a threshold on the participation index ($pi$) without statistical significance testing, leading to the detection of spurious patterns (as discussed in \cite{10.1145/3557989.3566158}). We \cite{10.1145/3557989.3566158} recently proposed a subgraph-based approach that incorporates statistical significance in detecting regional colocation patterns. This approach reduced the number of spurious patterns detected by previous methods. However, due to a large number of simultaneous statistical inferences, an increase in false discoveries is also observed.
Besides, other patterns \cite{sharma2022towards} and several statistical significance and false discovery reduction techniques have been studied in association rule mining \cite{webb2007discovering}, \cite{duivesteijn2011exploiting}. However, these approaches do not address the inherent variability in spatial data (i.e., different summary statistics of features in each atomic partition). To find subgroups of items, which are generally observed to be statistically significant associations, they compare a quality measure (which assigns to each itemset a numeric value) on the subgroup against that in a statistical model (which corresponds to the null hypothesis). These null hypotheses for significance testing are uniform and do not address spatial variability. Thus these approaches are not directly applicable to regional colocation patterns (more details in Appendix \ref{Appendix:localPatternMining}).
% where we compare regional colocation mining with association rule mining
\vspace*{-1.0mm}
\section{Conclusion and Future Work}
\label{sec:Conclusion}
In this paper, we refined the problem of the statistically significant regional-colocation pattern ($SSCRP$). We proposed a robust $MultComp$-$RCM$ approach that reduces the number of false positives using a Bonferroni correction. We theoretically show that MultComp-RCM has a lower or equal Type-I error and computational cost than $SSRCM$ along with experimental results. We extended the previous case study on retail establishments in Minnesota using the proposed approach showing a contrast between significant and non-significant patterns.\\
\noindent \textbf{Future Work:} We plan to explore other methods to reduce Type-I errors (false positives) while also addressing Type-II errors (false negatives) arising from the conservative Bonferroni correction approach and further add temporal dimension to these patterns.

% \section*{Acknowledgement}
% This material is based upon work supported by the National Science Foundation under Grants No. 2118285, 2040459, 1901099, and 1916518. We also thank Kim Koffolt, Yash Travadi, and the Spatial Computing Research Group for valuable comments and refinements.
\bibliography{document}
% \pagebreak
\newpage
\appendix
In this appendix, we address the following questions:
\vspace*{-1.0mm}
\section{Why can't we use existing false discovery reduction techniques from local pattern mining?}
\label{Appendix:localPatternMining}
% \url{https://o365coloradoedu-my.sharepoint.com/:p:/r/personal/sugh7020_colorado_edu/Documents/Presentation1.pptx?d=wd2e76c88964f42d7a0f7098320416194&csf=1&web=1&e=5F7JEh}
Existing techniques for reducing false discoveries in local pattern mining cannot be applied to this problem, because of spatial variability (i.e. constituent features of a regional colocation pattern might have different summary statistics in different atomic partitions). Webb \cite{webb2007discovering} proposed a holdout approach where one divides the data into exploratory and holdout sets. Patterns are generated using the exploratory data, while statistical tests are performed on the generated patterns using the holdout data. This technique may apply to atomic partitions with a large presence of constituent features (e.g., partition T41 in Figure \ref{fig:as_rul_mn_8}). However, it would be counterproductive in partitions where the number of feature instances is very low (e.g., partition T42 in Fig. \ref{fig:as_rul_mn_8}). In such partitions splitting the data points into exploratory and holdout sets would result in very few instances for the pattern detection process.
\begin{figure}[htbp]
  \centering
  \graphicspath{ {./images/} }
  \includegraphics[width=0.30\linewidth]{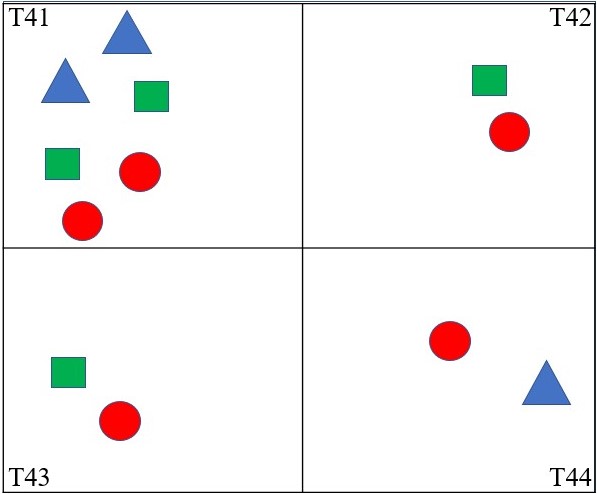}
  \caption{Feature instances exhibit spatial variability within atomic partitions.}
  \label{fig:as_rul_mn_8}
\end{figure}
\section{Why can't this problem be cast as a modified version of frequent itemset mining?}
In frequent itemset mining, the task is to find subgroups of items that often occur together in a transaction, e.g., laptop and antivirus software. Previous works have been done on addressing false discoveries in this problem \cite{duivesteijn2011exploiting}. Such approaches assign the association in the mined subgroup as the alternate hypothesis while the null hypothesis is formulated using a randomized baseline subset. Thus these approaches do not address the independent relationship between hypotheses in different spatial partitions in our problem. As noted earlier, in regional colocation pattern detection, different features might have different summary statistics in different atomic partitions. To model the complete spatial randomness of these features, we generate the null hypotheses in each atomic partition as per the summary statistics of the said features in that specific partition. Thus the null hypothesis generated for the features in one atomic partition is independent of the null hypothesis in other atomic partitions. Therefore, the problem of regional colocation pattern detection cannot be considered a modified version of subgroup discovery in frequent itemset mining.

\section{How does spatial colocation mining differ from association rule mining?}
Data mining techniques have been widely developed to solve challenging problems in various domains. Yet, the underlying assumption of these algorithms does not address the problem of spatial variability. This leads to the detection of spurious patterns in spatial data, also known as the modifiable aerial unit problem (MAUP \cite{wong2004modifiable}). Colocation pattern detection resembles association rule mining, but the absence of transactions in colocation mining means techniques in association rule mining cannot be used directly to mine colocation patterns.
\begin{figure*}[h]
\centering
\subfloat[Map of 3 spatial feature types]{\includegraphics[width = 3.6cm]{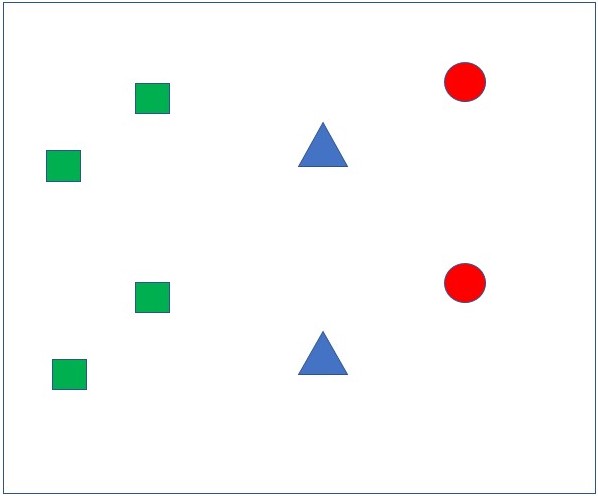}\label{subfig:as_rul_mn_1}}\hfill
\subfloat[Spatial Partition 1]{\includegraphics[width = 3.6cm]{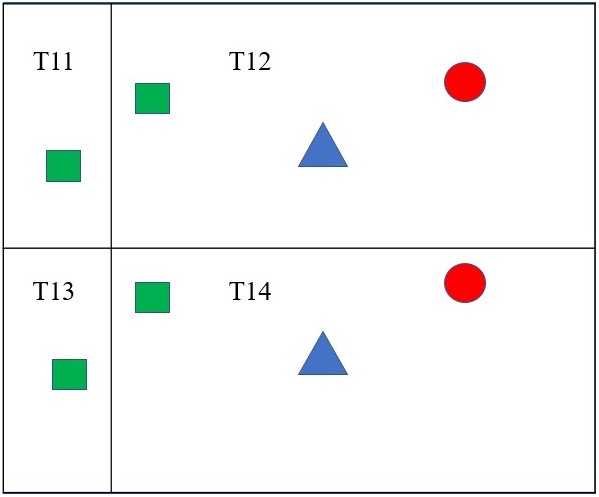}\label{subfig:as_rul_mn_2}}\hfill
\subfloat[Spatial Partition 2]{\includegraphics[width = 3.6cm]{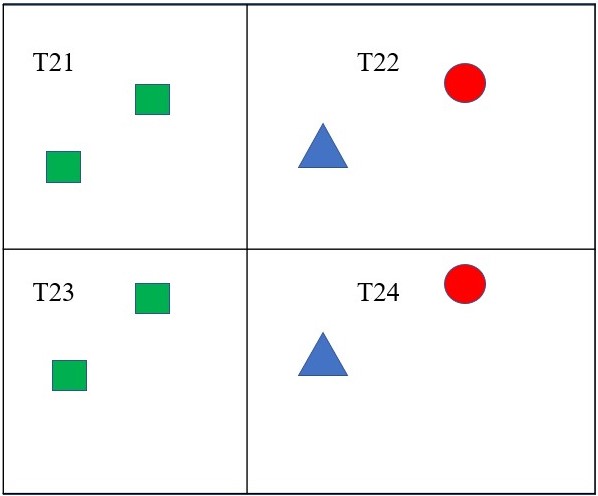}\label{subfig:as_rul_mn_3}}\vfill
\subfloat[Spatial Partition 3]{\includegraphics[width = 3.6cm]{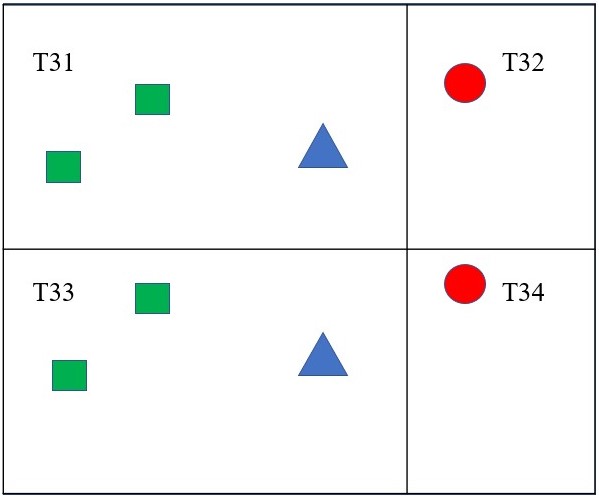}\label{subfig:as_rul_mn_4}}\hfill
\subfloat[Transactions]{\includegraphics[width = 8.9cm]{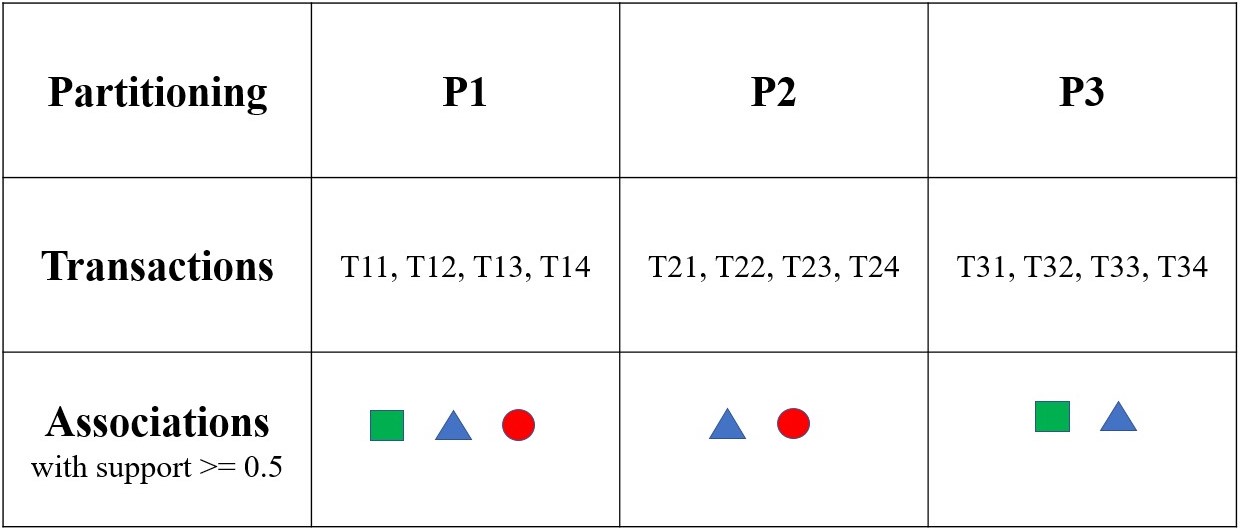}\label{subfig:as_rul_mn_5}}
\caption{Association rule mining \cite{10.1145/3557917.3567619} returning different results depending on the spatial partition}
\label{fig:as_rul_mn}
% \vspace{-10pt}
\end{figure*}

Transactions in association rule mining refer to groups of items purchased together. An itemset's support is the fraction of transactions that contain the itemset. Itemsets greater than a user-specified support value yield to the association rule. In spatial data mining, the choice of partition affects the transaction. For example, Figure \ref{subfig:as_rul_mn_1} below shows a dataset with 3 feature types, i.e. <\textcolor{green}{squares}>, <\textcolor{blue}{triangles}>, <\textcolor{red}{circles}>. In partition P1 (Figure \ref{subfig:as_rul_mn_2}) <\textcolor{green}{squares}, \textcolor{blue}{triangles}, \textcolor{red}{circles}> is a transaction, while in partitions P2 (Figure \ref{subfig:as_rul_mn_3}) and P3 (Figure \ref{subfig:as_rul_mn_4}) <\textcolor{blue}{triangles}, \textcolor{red}{circles}> and <\textcolor{green}{squares}, \textcolor{blue}{triangles}> are the transactions respectively. This is known as the MAUP problem.
In colocation pattern detection this is addressed using a neighborhood graph as shown in Figure \ref{subfig:as_rul_mn_6}. A user-defined neighbor relationship $R$ is used to find subsets of features in close geographic proximity. Thus the colocation miner provides a transaction-free approach to mine prevalent patterns. 

% Another drawback of using association rule mining techniques for colocation pattern detection is that they don't consider the inherent spatial variability.
\begin{figure*}[!htb]
\centering
\subfloat[Neighbor graph based on relation $R$]
{\includegraphics[width = 0.25\linewidth]{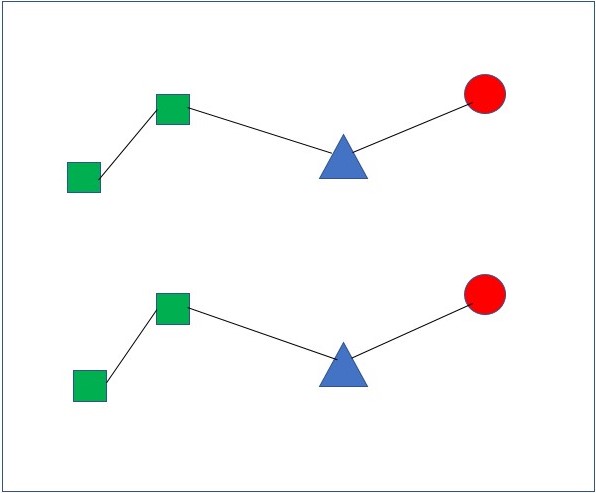}\label{subfig:as_rul_mn_6}}\hfill
\subfloat[PI of candidate patterns]{\includegraphics[width = 0.65\linewidth]{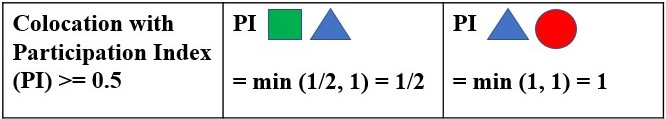}\label{subfig:as_rul_mn_7}}\vfill
\caption{Colocation pattern detection \cite{10.1145/3557989.3566158}}
\label{fig:as_rul_mn_coloc}
% \vspace{-10pt}
\end{figure*}
\end{document}